
\documentclass{article}

\usepackage{microtype}
\usepackage{graphicx}
\usepackage{subfigure}
\usepackage{multirow}
\usepackage{booktabs} 

\usepackage{hyperref}
\usepackage{url}
\usepackage{comment}
\usepackage{verbatim}



\usepackage[accepted]{icml2024}

\usepackage{amsmath}
\usepackage{amssymb}
\usepackage{mathtools}
\usepackage{amsthm}
\usepackage{dcolumn}
\usepackage{natbib}
\setcitestyle{numbers,comma,sort&compress,square}

\theoremstyle{plain}
\newtheorem{theorem}{Theorem}[section]

\newtheorem{lemma}[theorem]{Lemma}
\newtheorem{corollary}[theorem]{Corollary}
\theoremstyle{definition}
\newtheorem{definition}[theorem]{Definition}

\newtheorem{example}[theorem]{Example}
\theoremstyle{remark}

\DeclareMathOperator{\trace}{tr}
\DeclareMathOperator{\var}{Var}
\DeclareMathOperator{\sub}{Sub}
\DeclareMathOperator{\sign}{sgn}
\DeclareMathOperator{\argmin}{argmin}
\DeclareMathOperator{\argmax}{argmax}
\DeclareMathOperator{\irr}{Irr}


\icmltitlerunning{Grokking Group Multiplication with Cosets}

\begin{document}

\twocolumn[
\icmltitle{Grokking Group Multiplication with Cosets}



\icmlsetsymbol{equal}{*}

\begin{icmlauthorlist}
\icmlauthor{Dashiell Stander}{1}
\icmlauthor{Qinan Yu}{1,2}
\icmlauthor{Honglu Fan}{1,3}
\icmlauthor{Stella Biderman}{1}
\end{icmlauthorlist}

\icmlaffiliation{1}{EleutherAI}
\icmlaffiliation{2}{Brown University}
\icmlaffiliation{3}{University of Geneva}

\icmlcorrespondingauthor{Dashiell Stander}{dash.stander@gmail.com}

\icmlkeywords{Machine Learning, Grokking, Interpretability, Group Theory, Harmonic Analysis}

\vskip 0.3in
]



\printAffiliationsAndNotice{} 

\begin{abstract}
The complex and unpredictable nature of deep neural networks prevents their safe use in many high-stakes applications. There have been many techniques developed to interpret deep neural networks, but all have substantial limitations. Algorithmic tasks have proven to be a fruitful test ground for interpreting a neural network end-to-end. Building on previous work, we completely reverse engineer fully connected one-hidden layer networks that have ``grokked'' the arithmetic of the permutation groups $S_5$ and $S_6$. The models discover the true subgroup structure of the full group and converge on neural circuits that decompose the group arithmetic using the permutation group's subgroups. We relate how we reverse engineered the model's mechanisms and confirmed our theory was a faithful description of the circuit's functionality. We also draw attention to current challenges in conducting interpretability research by comparing our work to \citet{chughtai_toy_2023} which alleges to find a different algorithm for this same problem.
\end{abstract}

\section{Introduction}
\label{introduction}

Many methods have been proposed to render deep neural networks \emph{interpretable}. There is both an academic interest in understanding how neural networks do what they do and a societal interest in ensuring that decisions made by such models are sound, unbiased, and subject to human review. These concerns are not new, nor are they unique to deep neural networks. Many of the techniques developed (such as SHAP values \cite{lundberg2017unified}, saliency maps \cite{simonyan2014deep}, gradient attribution \cite{shrikumar2017just}, dimension reduction \cite{wold1987principal}, etc...) are still widely used today, but there is an understanding that such methods must be used as just one part of a careful analysis. Naive applications of even the most sophisticated algorithms will give misleading results \citep{adebayo2020sanity, bolukbasi2021interpretability, doshivelez2017rigorous, Jain2019AttentionIN}.

Mechanistic interpretability seeks to find ``neural circuits'' within deep neural networks, small sub-networks that act as connected computation graphs and accomplish a task. In ``toy'' (highly constrained) settings mechanistic interpretability has been successful, with multiple examples where the inner workings of neural networks have been successfully reverse engineered end-to-end \citep{gromov_grokking2023, nanda_progress_2023, nanda_emergent_2023, quirke2024understanding, transformers_recursion2023}. There have also been encouraging early successes in finding interpretable circuits within real-world models \citep{geva2021transformer, lieberum2023chinchilla, mcgrath2023hydra, olsson_-context_2022, interp_in_wild2022}, but there is already work emerging that illustrates how neural networks can resist common ``mechanistic interpretability'' methods \citep{friedman2023interpretability, makelov2023subspace, wen2023transformers}.

The toy interpretability projects that have succeeded have done so in large part because a distinct ground truth circuit that encodes the true nature of the task or environment emerged in the model. We build on this tradition and study a model that has perfectly learned to multiply permutations of five and six elements, which in mathematics is known as the symmetric groups $S_5$ and $S_6$, which are deeply studied and well-understood objects \citep{diaconis_persi_group_1988, dummit_abstract_2003, fulton_representation_1991}. We succeed in completely reverse engineering the model and enumerating the diverse circuits that it converges on to implement the multiplication of the symmetric group. Our work does not, however, represent an unmitigated success for the project of mechanistic interpretability. The prior work of \citet{chughtai_toy_2023} studied the exact same model and setting, but came to completely different conclusions. Understanding why our and \citet{chughtai_toy_2023}'s interpretations of the same data diverged required extensive effort (see Appendix \ref{sec:toy-model-universality} for a thorough comparison). \textbf{We find that even in a setting as simple and well understood as group arithmetic, it is incredibly difficult to do interpretability research and be confident about one's conclusions.}

Our main contributions are as follows:
\begin{itemize}
    \itemsep0em 
    \item We completely reverse engineer a one-hidden layer fully-connected network trained on the permutation groups $S_5$ and $S_6$.
    \item We apply a methodology inspired by \citet{geiger2023causal} to use causal experiments to thoroughly test all of the properties of our proposed circuit.
    \item We survey current research in mechanistic interpretability and draw connections between the difficulty of our work and broader challenges in the field.
\end{itemize}

\section{Related Work}

\textbf{Mechanistic Interpretability} Interpreting and reverse engineering the mechanism used to complete a given task is an active field in interpretability. Analysis of such mechanisms and circuits are discovered mainly through a top-down approach of causal mediation analysis. In the previous work \citet{hanna_how_2023, meng2023locating, tigges2023linear, wang2022interpretability}, the circuits are composed at the ``component level'' using the feed-forward layer and attention heads. We analyze the mechanisms of neural networks at the \emph{circuit level} of individual and small groups of neurons, drawing directly on the work of \citet{nanda_progress_2023, nanda_emergent_2023, olah_zoom_2020, quirke2024understanding, clock_and_pizza2023, transformers_recursion2023}. Our work builds directly on ``A Toy Model of Universality'' by \citet{chughtai_toy_2023}. We recreated precisely their experimental setup for the groups $S_5$ and $S_6$, though we came to different conclusions.

\textbf{Grokking} The models we study exhibit ``grokking'', wherein the model first memorizes the training set and then much later generalizes to the held out data perfectly. Grokking was first identified by \citet{power_grokking_2022} and has been well studied for its counter-intuitive training dynamics \citep{kumar_grokking_2023, grokking_competition_merrill2023, grokking_efficiency2023, grokking_phase_transition2023, benign_grokking2023, wei2022emergent}. We conducted all the analysis on fully grokked models with perfect test accuracy, as models that show this behavior have often formed clean generalizing circuits that are more easily interpreted \citep{gromov_grokking2023, nanda_progress_2023}.

\textbf{Group Theory} We used many of the tools of group theory for our analysis, in particular the well-developed representation theory of the symmetric group. Tools for analyzing data on groups are well-laid out in \citet{clausen_fast_1993, cohen_group_2016, diaconis_persi_group_1988, kondor_risi_group_2008, kondor_generalization_2018, huang_fourier_2009, karjol_neural_2023, plumb_snfft_2015}.

\section{Mathematical Preliminaries}

\begin{figure}
    \centering
    \includegraphics[width=5cm]{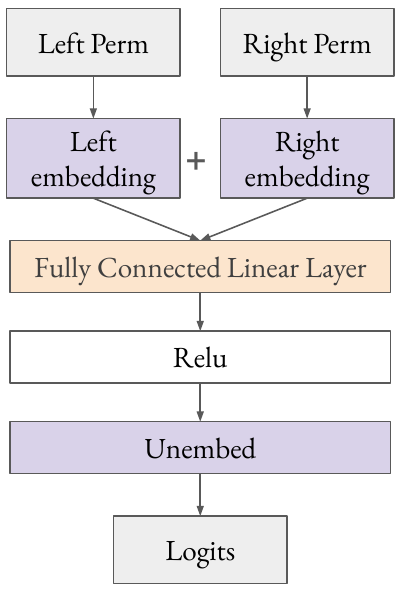}
    \caption{Model Architecture: we follow the model architecture used by \citet{chughtai_toy_2023}. The one-hot vectors of left and right permutations pass through separate embeddings. We concatenate the embeddings and pass them through a single fully-connected hidden layer with \texttt{ReLU} activations. An unembedding matrix transforms the activations into logits.}
    \label{fig:model_archiecture}
\end{figure}

This paper requires a familiarity with functions on groups, a topic that is uncommon in machine learning research. In this section we give an overview of the major concepts as they are realized in the permutation groups that we study. For a more formal introduction to group theory, please refer to Appendix \ref{appendix:group-theory}.

\subsection{Permutations and the Symmetric Group}
\label{sec:permutations-basics}

A permutation of $n$ elements is a map $\sigma$ that sends one ordering of $n$ elements to a different ordering. For example the order-reversing permutation on four elements would be: \[
(1 \; 2 \; 3 \; 4) \stackrel{\sigma}{\mapsto} (4 \; 3 \; 2 \; 1)
\]

The identity permutation, denoted $e$, leaves the ordering unchanged:

\[
(1 \; 2 \; 3 \; 4) \stackrel{e}{\mapsto} (1 \; 2 \; 3 \; 4)
\]

We refer to specific permutations by identifying them with the image of their action on the elements $[n] \coloneqq \{1, 2, \dots, n\}$ in increasing order. For the above example we would simply denote the order reversing permutation on four elements as $(4 \; 3 \; 2 \; 1)$.

We multiply two permutations on $n$ elements $\sigma, \tau$ by composition, read from right to left. If $\sigma = (4 \; 3 \; 2 \; 1)$ and $\tau = (3 \; 2 \; 1 \; 4)$, then 
$\sigma\tau$ is the permutation we obtain by first applying $\tau$ and then applying $\sigma$ to the output of $\tau$:
\[(1 \; 2 \; 3 \; 4)  \stackrel{\tau}{\mapsto} (3 \; 2 \; 1 \; 4)  \stackrel{\sigma}{\mapsto} (4 \; 1 \; 2 \; 3) \]

First applying $\tau$ and then $\sigma$ has the same effect as just applying the permutation $(4 \; 1 \; 2 \; 3)$. Additionally every permutation $\sigma$ has an inverse $\sigma^{-1}$ such that $\sigma\sigma^{-1} = e$. These properties makes all of the permutations on $n$ elements a \emph{group} called the symmetric group, which we write $S_{n}$.

There are six permutations in $S_4$ that do not change the position of $4$:
\[\begin{matrix}
 (1 \; 2 \; 3 \; 4) & (2 \; 1 \; 3 \; 4) & (3 \; 2 \; 1 \; 4) \\
 (1 \; 3 \; 2 \; 4) & (3 \; 1 \; 2 \; 4) & (2 \; 3 \; 1 \; 4)
\end{matrix}\]

These six permutations form a \emph{subgroup} of $S_4$ because multiplication is closed within that subset, multiplying any two permutations that leave $4$ unchanged results in another permutation that leaves $4$ unchanged. You can see that these six permutations are isomorphic to $S_3$ by simply ``forgetting'' about the $4$ that is fixed in the fourth position. In the paper, we will refer to the subgroup of $S_n$ isomorphic to $S_{n-1}$ that leaves element $i$ fixed as $H_i$.

One of the simplest types of permutations is a ``transposition,'' a permutation $\tau \in S_n$ that switches (``transposes'') two elements $i,j \in [n]$ and leaves the remaining elements fixed. Every element of $S_n$ can be decomposed into a product of transpositions. A given decomposition of a permutation is not unique, but the number of transpositions in the decomposition is an invariant of the permutation. For a permutation $g \in S_n$ if a set of transpositions $\tau_{1}\tau_{2}\dots\tau_{k} = g$, then every possible such set of transpositions will also have $k$ elements. The permutations that have an even number of transpositions are referred to as ``even'' permutations and those with an odd number are ``odd.'' The set of all even permutations in $S_n$ is a subgroup referred to as the ``alternating group'' $A_n$.

If we take $H_4 < S_4$ and multiply every element of on the left by some element $\sigma \in S_4$ then we get a \emph{left coset} of $H_4$ denoted $\sigma H_4$. The transposition $\tau = (4 \; 2 \; 3 \; 1)$ switches the elements in the first and fourth positions. The elements of $\tau H_4$ are:

\[\begin{matrix}
 (4 \; 2 \; 3 \; 1) & (4 \; 1 \; 3 \; 2) & (4 \; 2 \; 1 \; 3) \\
 (4 \; 3 \; 2 \; 1) & (4 \; 1 \; 2 \; 3) & (4 \; 3 \; 1 \; 2)
\end{matrix}\]

This coset is characterized by every element having $4$ in the first position. Every element of $H_4$ has $4$ in the fourth position and $\tau$ switches the first and fourth positions. For any $h \in H_4$, $h\tau$ has $4$ in the first position because $\tau$ moves it from the fourth. We would get a coset with all of the elements of $S_4$ with $4$ in the third position if we multiplied $H_4$ on the left by the any permutation that switches three and four.

There are also \emph{right cosets} where every element in a subgroup is multiplied from the right. The elements of $H_{4}\tau$ are:

\[\begin{matrix}
 (4 \; 2 \; 3 \; 1) & (1 \; 4 \; 3 \; 1) & (3 \; 2 \; 4 \; 1) \\
 (4 \; 3 \; 2 \; 1) & (3 \; 4 \; 2 \; 1) & (2 \; 3 \; 4 \; 1)
\end{matrix}\]

This right coset is characterized by every element having $1$ in the fourth position.

There are in fact four subgroups $H_i < S_4$ that are isomorphic to $S_3$, one where each element $\{1, \dots, 4 \}$ is fixed. In general there are at least $n$ subgroups of $S_n$ that are isomorphic to $S_{n-1}$. Any two $H_{i},\; H_{j}$ are \emph{conjugate} to each other. Conjugation by an element $\sigma$ maps $x \mapsto \sigma x \sigma^{-1}$. So if we have $H_4$ and conjugate it by $\sigma = (1 \; 4 \; 3 \; 2)$, then $\sigma H_4 \sigma^{-1}$ is $H_2$:

\[\begin{matrix}
 (1 \; 2 \; 3 \; 4) & (3 \; 2 \; 1 \; 4) & (4 \; 2 \; 3 \; 1) \\
 (1 \; 2 \; 4 \; 3) & (4 \; 2 \; 1 \; 3) & (3 \; 2 \; 4 \; 1)
\end{matrix}\]

If a subgroup is invariant to conjugation it is a \emph{normal} subgroup. The only normal subgroup of $S_n$ for $n > 4$ is the \emph{alternating group} $A_n$ of even permutations.

We will mostly refer to groups by name, but we will denote a general group as capital $G$ and a general subgroup as $H \le G$. For a proper subgroup ($H\neq G$), we will write $H < G$. For a normal subgroup, we will use $N \trianglelefteq G$.

\subsection{Fourier Transform over Groups}

Though Group Fourier Transform is not central to our presentation of the coset circuit, it was an important tool that we used to analyze the the activations of the trained models. It is also a critical part of \citep{chughtai_toy_2023}. We introduce the concepts here and go over the the similarities and differences between our work and \citep{chughtai_toy_2023} in Section \ref{sec:toy-model-universality}.

We begin with a presentation of the Discrete Fourier Transform (DFT), and then present the Group Fourier Transform by analogy. The DFT converts a function $f$  defined on $\{0, 1, \dots, n-1\}$ to a complex-valued function via the formula: \[ \hat{f}(k) = \sum_{t=0}^{n-1}f(t)e^{-2i\pi kt/n}, \quad k \in \{0, \dots, n-1\} \]
The DFT is commonly interpreted as a conversion from the \emph{time} domain to the \emph{frequency} domain because the $e^{-2i\pi kt/n}$ terms define a complex sinusoid with frequency $2\pi kt/n$. The frequency domain in this case means that these frequencies provide an alternative orthonormal basis from which we can work with functions. A function on $\{0, 1, \dots, n-1\}$ can be represented as a vector 
$f = \begin{pmatrix}
    x_0 & x_1 & \dots & x_{n-1} 
\end{pmatrix}^{\top}$
and its basis is given by the identity matrix $I_n$. The DFT defines a basis transformation, much like any other. The Fourier basis is given $n$ vectors. The first basis vector, corresponding to $k = 0$, is all ones. The $k=1$ basis vector is $\begin{pmatrix}
    1 & e^{-2i\pi/n} & \dots &e^{-2i\pi(n-1)/n}
\end{pmatrix}$, and all of the rest for up to $n\!-\!1$ are given by $\begin{pmatrix}
    1 & e^{-2i\pi k/n} & \dots &e^{-2i\pi(n-1)k/n}
\end{pmatrix}$.


The DFT has a particularly nice interpretation as a function on the cyclic group $C_n$, which is isomorphic to addition modulo $n$. Please refer to Appendix \ref{appendix:group-theory} or to references such as \citep{diaconis_persi_group_1988, fulton_representation_1991, kondor_risi_group_2008} for a more detailed discussion.

The interpretation of the DFT as being over the cyclic groups can be generalized to non-commutative groups. We go over the construction in Appendix \ref{appendix:rep-theory} and \ref{appendix:group-fourier-transform}. The high level interpretation, however, is the same. For functions from $S_n \rightarrow \mathbb{C}$ there is an orthonormal basis that is equivariant to translations and convolutions. The frequencies for the Fourier transform over $S_n$ are given by the partitions of $n$. The ``highest'' frequencies can be interpreted as representing functions that are constant on permutations that all agree on a small number of elements of $[n]$ \citep{ellis_intersecting_2017}.

\section{Model Architecture} 
\label{sec: architecture}

As shown in Figure \ref{fig:model_archiecture}, the model we study contains separate left and right embeddings, followed by a fully connected linear layer with \texttt{ReLU} activations, and an unembedding layer. We use the same architecture as in \citep{chughtai_toy_2023} to enable consistent comparisons. \footnote{All code necessary for reproducing results and analysis is available at \url{https://www.github.com/dashstander/sn-grok}}

\begin{itemize}
    \item One hot vectors $\mathbf{x}_{g}$ with length $|G|$.
    \item Two embedding matrices, $\mathbf{E}_{l},\; \mathbf{E}_{r}$ with dimensions $(d, \; |G|)$, where $d$ is embedding dimension. $S_n$ is non-abelian, i.e. not commutative, and the separate embeddings are to give the model extra capacity.
    \item A linear layer $\mathbf{W}$ with dimension $(w, \; 2d)$, $w$ denoting the width of the linear layer. After the linear layer we apply the \texttt{ReLU} pointwise nonlinearity.
    \item  An unembedding layer $\mathbf{U}$ with dimension $(|G|, \; w)$, which transforms the outputs of the \texttt{ReLU} and linear layer to into logit space for the group.
\end{itemize}

We also note that the first $d$ columns of the linear layer will only act on the left embeddings and the second $d$ columns will only act on the right embeddings, so we can analyze $\mathbf{W}$ as the concatenation of two  $(w, \; d)$ matrices: $\mathbf{W} = [\mathbf{L} \; \mathbf{R}]$. 
\[
\mathbf{W}\begin{bmatrix}\mathbf{E}_{l}\mathbf{x}_{g} \\ \mathbf{E}_{r}\mathbf{x}_{h} \end{bmatrix} =  \mathbf{L}\mathbf{E}_{l}\mathbf{x}_{g} + \mathbf{R}\mathbf{E}_{r}\mathbf{x}_{h}
\]

Throughout the paper will refer to the values $\mathbf{L}\mathbf{E}_{l}\mathbf{x}_{g}$, $\mathbf{R}\mathbf{E}_{r}\mathbf{x}_{h}$, and their sum as ``\textbf{pre}-activations'' to denote that the \texttt{ReLU} activation function has not been applied. Post-\texttt{ReLU} values we refer to as ``activations.''

\section{Coset Circuits}
\label{sec: coset circuit}

\subsection{Sign Neurons Implement the Sign Circuit}
\label{sec:sign-circuit}


\begin{figure}
\begin{center}
    \includegraphics[width=6cm]{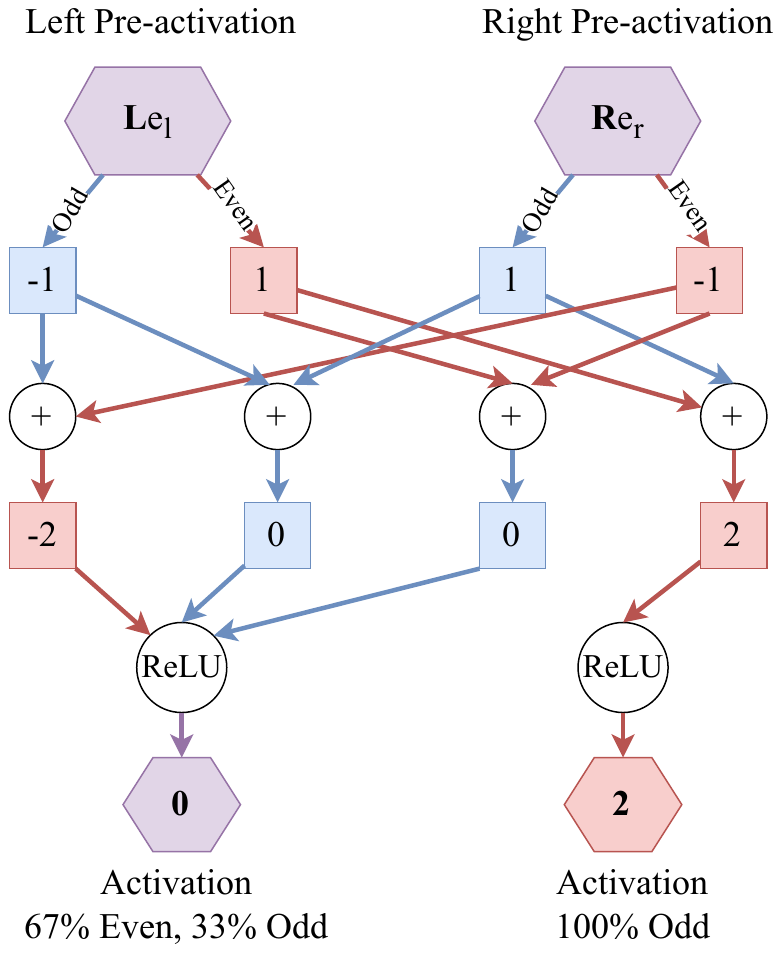}
    \caption{A diagram showing the four possible paths through a single neuron (i.e. one row of $\mathbf{R}\mathbf{E}_{r}$) that implements part of a ``sign circuit.'' The model stores whether a permutation is ``even'' or ``odd'' in the embeddings, represented in the left or right pre-activation values. The pre-activations are added together and then the \texttt{ReLU} activation is applied. The neuron only fires when the left permutation is even and the right is odd. If the neuron does \emph{not} fire, then in $1/3$ cases the product is odd and $2/3$ it is even.}
    \label{fig:sign-neuron}
    \end{center}
\end{figure}

The even permutations form a subgroup called the alternating group $A_n$. The two cosets of $A_n$ are the group itself and all of the odd permutations, $\tau A_n$. The multiplication of even and odd permutations has similar features to the addition of even and odd integers (hence the name). The sign map on a permutation in $S_n$, $\sign$, is given by: 
\[ \sign(\sigma) = \begin{cases} 1 \quad \sigma \in A_n \\ -1 \quad \sigma \in \tau A_n \end{cases} \]

An ``even'' permutation that is in $A_n$ is mapped to $1$ and an ``odd'' permutation \emph{not} in $A_n$ is mapped to $-1$. For any $\sigma,\rho \in S_n$, the sign of their product is the product of their signs: $\sign(\sigma \rho) = \sign(\sigma)\sign(\rho)$.

The one-layer model that we train uses this relationship to help solve the general group multiplication. Every single model we trained had at least two neurons dedicated to encoding the sign of the permutation product. Though the model cannot use the alternating group to completely solve multiplication in $S_n$, this \emph{sign circuit} is emblematic of the general coset circuits the model forms.

Consider the neuron shown in Fig. \ref{fig:sign-neuron}. The left pre-activations are given by $L(\sigma) = \sign(\sigma)$ and the right pre-activations are $R(\sigma) = -\sign(\sigma)$. The full action of the neuron is given by $\texttt{ReLU}(L(\sigma_{l}) + R(\sigma_{r}))$ and there are three cases: 

\begin{enumerate}
    \item \label{even-perm-zero} $\sign(\sigma_{l}) = \sign(\sigma_{r}) \Rightarrow \sign(\sigma_{l}\sigma_{r}) = 1$. In this case $L(\sigma_{l})$ and $R(\sigma_{r})$ destructively interfere, cancelling out to $0$. Both the pre-activation and activation are $0$.
    \item \label{odd-perm-pos} $\sign(\sigma_{l}) = -1, \;  \sign(\sigma_{r}) = 1 \Rightarrow \sign(\sigma_{l}\sigma_{r}) = -1 $. In this case $L(\sigma_{l})$ and $R(\sigma_{r})$ reinforce each other and sum to a positive value. Since $2 > 0$, the activation value is $2$. 
    \item \label{odd-perm-neg} $\sign(\sigma_{l}) = 1, \;  \sign(\sigma_{r}) = -1 \Rightarrow \sign(\sigma_{l}\sigma_{r}) = -1 $. Like in (\ref{odd-perm-pos}) the product $\sigma_{l}\sigma_{r}$ is an odd permutation and $L(\sigma_{l})$ and $R(\sigma_{r})$ constructively interfere, though this time $L(\sigma_{l}) + R(\sigma_{r}) = -2$, which is less than $0$. Thus \texttt{ReLU} clips the pre-activation and sends it to $0$.
\end{enumerate}

\subsection{Conjugate Subgroup Circuit}
\label{sec:conjugate-subgroup-circuit}

All four ways to multiply two cosets of $A_n$ are well-defined. For each of the four options (even-even, odd-even, etc...) we know which coset of $A_n$ the product will be in, but no other subgroup of $S_n$ has this property. The model instead learns to use sets of conjugate subgroups. Recall that $H_i < S_n$ is the subgroup isomorphic to $S_{n-1}$ that fixes the element $i \in [n]$ in the $i^{\text{th}}$ place and $\tau_{ij}$ is the permutation that swaps $i$ and $j$. Any two $H_i$ and $H_j$ are conjugate to each other, $\tau_{ij}H_{i}\tau_{ij} = H_j$ and $\tau_{ij}H_{j}\tau_{ij} = H_i$. This means that there are two \emph{shared} cosets between $H_i$ and $H_j$, because $H_{i}\tau_{ij} = \tau_{ij}H_{j}$ and $H_{j}\tau_{ij} = \tau_{ij}H_i$. \textbf{The model implements the full group multiplication by picking out the shared cosets of conjugate subgroups.}

As an example, consider a neuron that corresponds to $H_{1}$ for the left permutation and $H_5$ for the right permutation. The shared coset is $H_{1}\tau_{15} = \tau_{15}H_{5}$, the set of all $\sigma \in S_5$ with $\sigma(1) = 5$. The pre-activations for the left and right permutations will be: 

\begin{align}
L(\sigma) = \begin{cases}
    4 \quad \sigma \in H_{1} \\
    2 \quad \sigma \in H_{1}\tau_{12} \\
    0 \quad \sigma \in H_{1}\tau_{13} \\
    -2 \quad \sigma \in H_{1}\tau_{14} \\
    -4 \quad \sigma \in H_{1}\tau_{15} \\
\end{cases} &
R(\sigma) = \begin{cases}
    -4 \quad \sigma \in \tau_{15}H_{5} \\
    -2 \quad \sigma \in \tau_{25}H_{5} \\
    0 \quad \sigma  \in \tau_{35}H_{5}\\
    2 \quad \sigma \in \tau_{45}H_{5} \\
    4 \quad \sigma \in H_{5} \\
\end{cases}
\end{align}

The final activation is still $\texttt{ReLU}(L(\sigma_{l}) + R(\sigma_{r}))$, but now there are twenty-five possible pairs of cosets. All twenty-five combinations can be boiled down to two meaningful cases:

\begin{enumerate}
    \item If $L(\sigma_{l}) + R(\sigma_r) = 0$, then $\sigma_{l}\sigma_{r}$ is \textbf{in} the shared coset $H_{1}\tau_{15}$.
    \item If $L(\sigma_{l}) + R(\sigma_r) \ne 0$, then $\sigma_{l}\sigma_{r}$ is \textbf{not in} the shared coset $H_{1}\tau_{15}$.
\end{enumerate}

Each left coset $yH_5$ has a paired right coset $H_{1}x$ such that $H_{1}xyH_{5} = H_{1}\tau_{15} = \tau_{15}H_{5}$. The discrete values that $L$ and $R$ can take are precisely tuned so that those pairs of left and right cosets cancel out. Just like with the sign neuron, information about the pre-activation being negative is lost with the \texttt{ReLU}. This lost information has to be made up with extra neurons that correspond to $(H_{1}, H_5)$ and assign different values to the cosets. For example, a different neuron that uses $-L(\sigma_{l}) - R(\sigma_{r})$ will fail to fire for a different set of permutations. The combination \[\texttt{ReLU}(L(\sigma_{l}) + R(\sigma_{r})) + \texttt{ReLU}(-L(\sigma_{l}) - R(\sigma_{r}))\] will be much closer to a perfect on/off switch for coset membership.

\subsection{Decoding Permutations with Coset Membership}


There are $n^2$ combinations of $(H_i, H_j)$ subgroups. Each pair can be interpreted directly as encoding the set of permutations with $i$ in the $j\textsuperscript{th}$ position. Because of the way the coset neurons function, each neuron is better understood as firing when the value in the $j\textsuperscript{th}$ position is certainly \emph{not} $i$. The $n^2$ combinations of $(H_i, H_j)$ uniquely identify each element of $S_n$. We can use the outputs of twenty-five $(H_i, H_j)$ neurons as a code that uniquely encodes each element of $S_5$. By analyzing the unembedding layer to see how the model makes use of $(H_i, H_j)$ neurons, we see that this is almost exactly what the model does. This same construction works for every subgroup of $S_n$ except for $A_n$.

\begin{figure*}[ht]
    \centering
    \includegraphics[width=\linewidth]{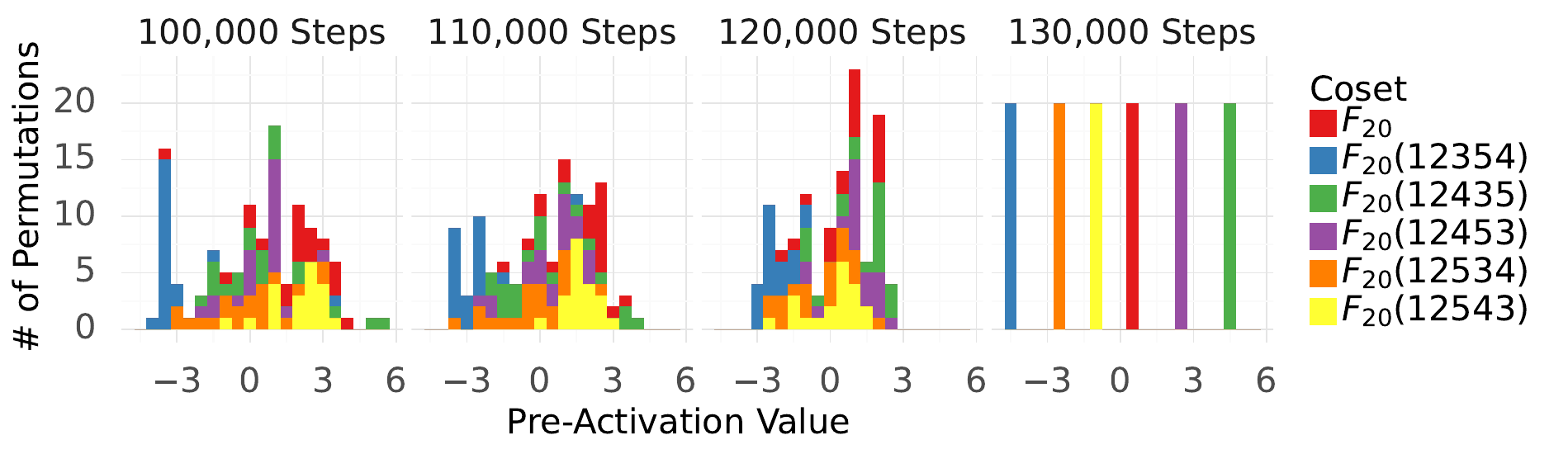}
    \caption{An illustration of the phenomenon of ``concentration on cosets,'' depicting the 115th neuron from seed 11. We show the evolution of the left pre-activations (the pre-\texttt{ReLU} outputs of a layer) of training on an $F_{20}$ neuron from 100k to 130k steps. The seed of the neuron's functionality is already present at 100k steps, where it fires very strongly and negatively for permutations in the coset $F_{20}(1 \; 2 \; 3 \; 5 \; 4)$, but it takes time for the action of the neuron to ``clean up'' on the other cosets of $F_{20}$. The distribution found at 130k steps does not change very much afterwards. Noticing this common pattern of neurons taking on these discrete values was a striking piece of evidence that required further investigation.} 
    \label{fig:coset-circuit-dev}
\end{figure*}

\section{The Process of Reverse Engineering}
\label{sec:reverse-engineering}

\subsection{Identifying Coset Circuits}
\label{sec:coset-concentration}

The first step in attempting to reverse engineer the mechanisms of a neural network is to spend some time staring at the weights and activations. Even a small one-layer model such as ours is too large to visualize all at once. It was not until we looked closely at the pre-\texttt{ReLU} activations that we produced a histogram similar to Figure \ref{fig:coset-circuit-dev}. The left and right pre-activations of one neuron were nearly constant on the distinct cosets of the Frobenius group of order 20 ($F_{20}$), one of the subgroups of $S_5$. \footnote{$F_{20}$ is equivalent to the group of affine transformations $x \mapsto ax + b$, where $a,b,x$ are in the field with five elements and $a \neq 0$.} Further investigation revealed that almost every neuron had this property of only producing a discrete number of values that corresponded directly to the cosets of one of the subgroups of $S_5$ or $S_6$. For a function $f: G \rightarrow \mathbb{R}$, we define $C_{H}(f)$ to be the degree to which $f$ concentrates on the cosets $H \le G$: \[ 
C_{H}(f) \coloneqq \frac{\sum_{gH}\var[f|_{gH}]}{\var[f]}
\]

Where $\var[f|_{gH}]$ is the variance of $f$ when the domain is restricted to the coset $gH$. Intuitively $C_{H}(f)$ calculates the degree to which restricting to the cosets of $H$ reduces the variance of $f$. If $C_{H}(f) < 1$ it implies that the activations $f$ can meaningfully be understood better by looking at the values that it takes on the cosets of some subgroup. Recall that a single neuron is a function $N_i: S_{n}\times S_{n} \rightarrow \mathbb{R}$ is the sum of two functions $G \rightarrow \mathbb{R}$, one for the left and right permutations, respectively. We can calculate $\min C_{H}$ for each. Take as an example $N^{l}_{115}$, the neuron shown in Figure \ref{fig:coset-circuit-dev}. At 100,000 steps (on the far left) $\var[N^{l}_{115}] = 5.23$. Its activations are not concentrated on the specific cosets of $F_{20}$, however, and $C_{F_{20}}(N^{l}_{115}) = 2.96$. At 130,000 steps (on the far right) $\var[N^{l}_{115}]$ has increased to $9.06$, but $C_{F_{20}}(N^{l}_{115}) < 10^{-5}$. The distribution within each coset of $F_{20}$ has close to zero variance.

We see a typical example of what this looks like for the entire model in Fig. \ref{figure:coset-concentration}. As the validation loss approaches a small value, there is a rapid transition from the median coset concentration being approximately $1$, to a minuscule value.

Even if it is apparent that a neuron is taking on discrete values and is a good candidate for being a coset neuron, it is difficult to tell by sight which subgroup the neuron is activating for. $S_5$ and $S_6$ only have 156 and 1,455 subgroups, respectively,  \footnote{Sequence A005432 OEIS \citep{oeis}} so it is tractable to do an exhaustive search and calculate 
\[
    \argmin_{H \in \sub(G)}C_{H}(f)
\]
the subgroup that minimizes the variance of $f$ for every neuron in the model. Running these calculations shows that for the 128 $S_5$ models and 100 $S_6$ models we trained over 99.2\% of the neurons in the linear layer had $\min_{H \in \sub(G)}C_{H}(f) < 1.0$, and the vast majority of those were less than $10^{-6}$.

With the ability to calculate directly which neurons corresponded to which subgroup, our theories for exactly what the neurons were representing fell into place. The next step was to confirm that these neurons were actually responsible for the models' performance.

\begin{figure*}[h!]
\centering
    \centering   
    \begin{subfigure}{
        \includegraphics[width=0.45\linewidth]{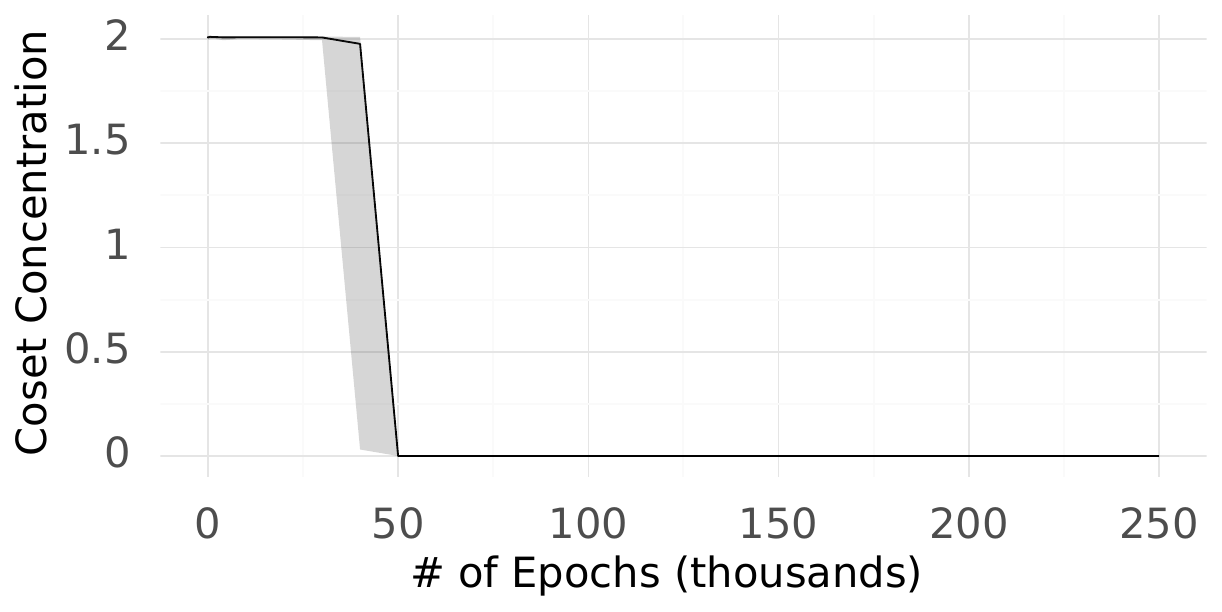}}    
    \end{subfigure}
    \begin{subfigure}{
        \includegraphics[width=0.45\linewidth]{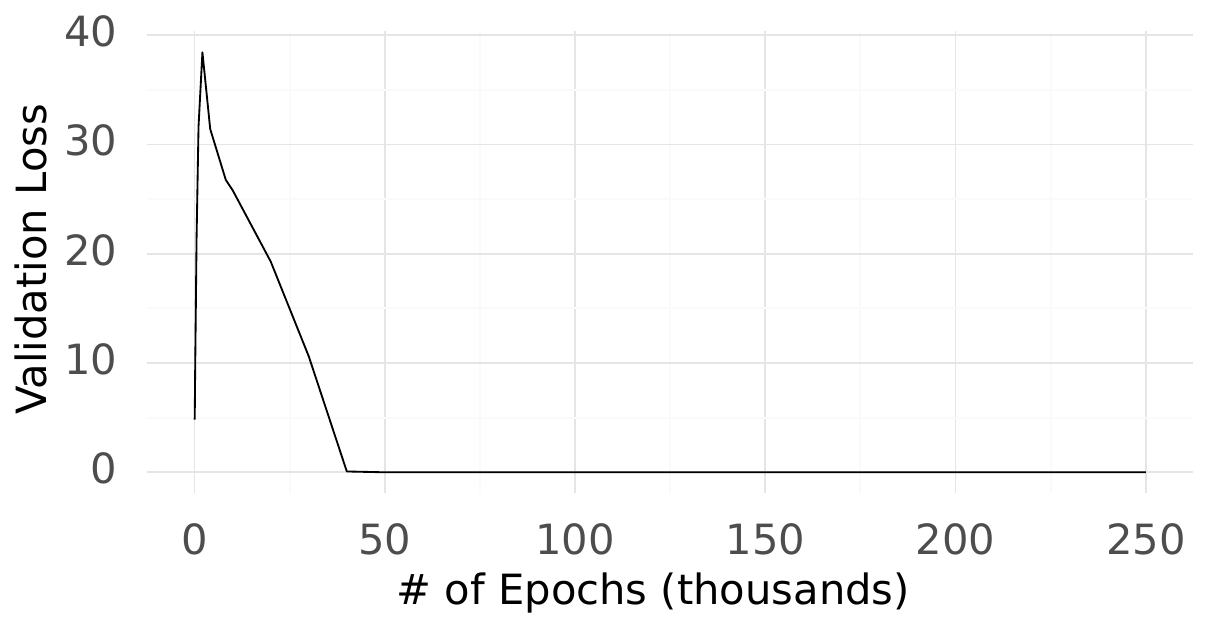}}
    \end{subfigure}
    \caption{The paired evolution of the the validation loss and $\min_{H \in Sub(H)} C_H$, which encodes the formation of coset circuits. Displayed is the $S_5$ model with random seed 1. Different runs will form coset circuits at different times in training, but the effect is representative.}   
    \label{figure:coset-concentration}
\end{figure*}

\begin{figure}[h]
    \centering
    \includegraphics[width=0.8\linewidth]{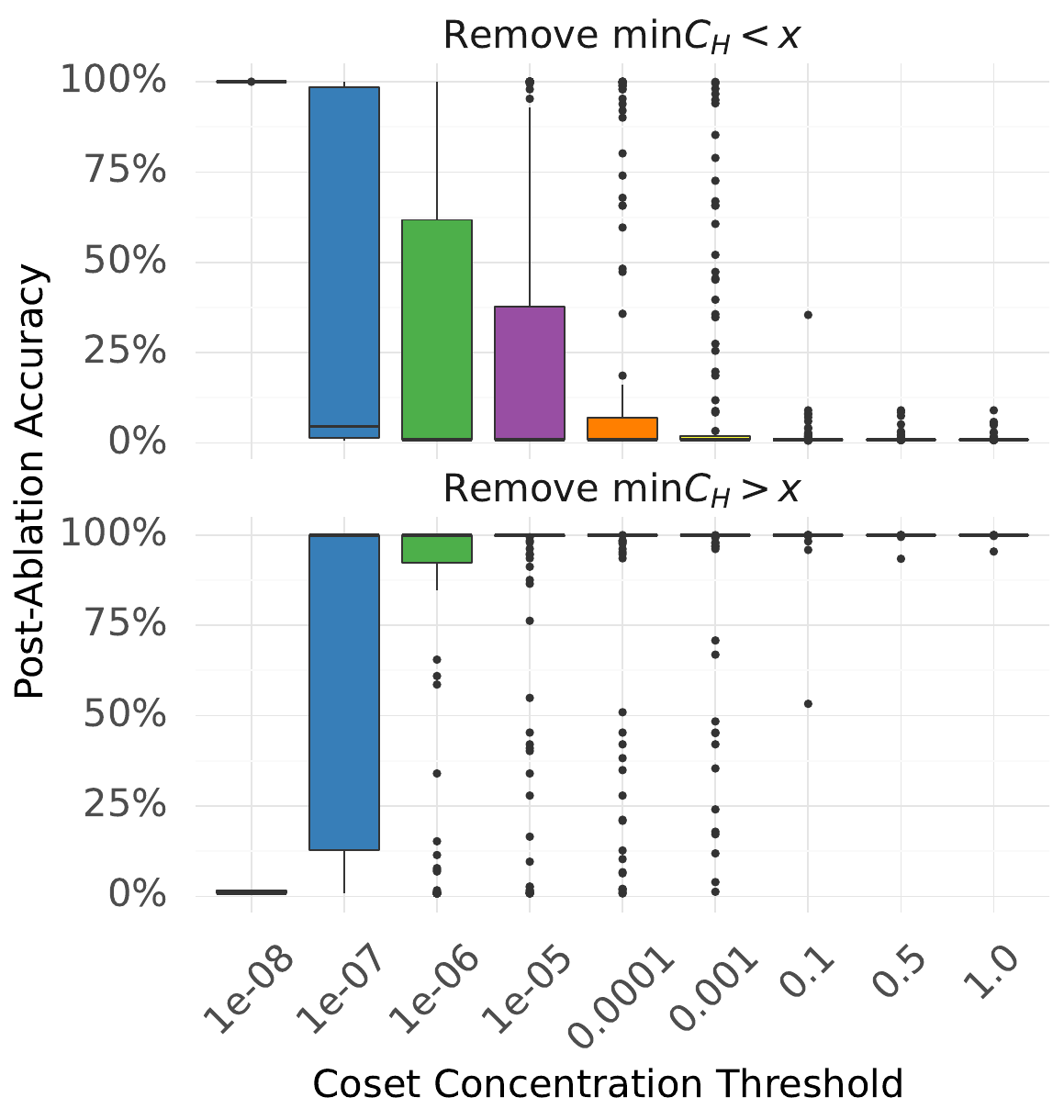}
    \caption{We perform ablations by re-calculating the accuracy after removing any neurons $N_i$ that have  $\min_{H \in \sub(G)}C_{H}(N_i)$ greater than (top figure) or less than (bottom figure) the thresholds on the x-axis.}
    \label{fig:ablations}
\end{figure}
\subsection{Ablations}
\label{sec:ablations}

We have described how coset neurons function and how they can be identified. We will now show via ablations that coset neurons are not solely sufficient but also necessary to implement multiplication in $S_n$. We conduct ablations by removing neurons which have a coset concentration $\min_{H \in \sub(G)}C_{H}(N_i)$ above a threshold.

If coset circuits are in fact responsible for the performance of our models, then we expect to see no change in the accuracy when the neurons that have not converged onto the cosets of a subgroup are removed from the model. This is precisely what we see on the far right of Figure \ref{fig:ablations}. Of the 128 $S_5$ models that we trained, 126 models saw no change in the accuracy when we removed the neurons with $\min_{H \in \sub(G)}C_{H}(N_i) \ge 1$ (the far right of Figure \ref{fig:ablations}). Recall that if $C_{H}(N_i) \ge 1$, restricting to the cosets of $H$ at best does not change the variance of $f$. Of the two models that did show a decrease in accuracy, they decreased to 99\% and 98\%.

We see more between-run variation when we remove more neurons. The median model has 24 out of 128 neurons with $\min C_{H}(N_i) \ge 10^{-5}$, but the $50^{\text{th}}$ and  $25^{\text{th}}$ percentile accuracy is still 100\%. It is not until we set the threshold to $10^{-6}$ that the $25^{\text{th}}$ percentile moves at all. When we set the threshold at $10^{-7}$ the performance for many models has collapsed, but the median model has had 42 neurons removed and the median accuracy is still 100\%. Recall also that the neuron shown in the far right of Figure \ref{fig:coset-circuit-dev} has a coset concentration of $10^{-5}$.

The overwhelming majority of neurons are identifiable as coset neurons. Of those neurons, those with the very highest concentration on cosets account for the largest portion of each model's performance.

\subsection{Causal Interventions}
\label{sec:causal-interventions}

\begin{table*}[htp]
\centering
\caption{Causal interventions aggregated over 128 runs on $S_5$ with different sizes}\label{tab:interventions}
\begin{tabular}{lccc}
\toprule
Intervention & Mean Accuracy & Mean Loss \\
\midrule
Base Model & 99.99\% & 1.97e-6  \\
Embedding Swap & 1\% & 4.76 \\
Switch Left and Right Sign & 100\% & 1.97e-6  \\
Switch Left Permutation Sign & 0\% & 22.39 \\
Switch Right Permutation Sign & 0\% & 22.36 \\
Perturb $\mathcal{N}(0, 0.1)$ & 99.99\% & 2.96e-6\\
Perturb $\mathcal{N}(0, 1)$ & 97.8\% & 0.0017  \\
Absolute Value Non-Linearity & 100\% & 3.69e-13 \\
Perturb $\mathcal{N}(1, 1)$ & 88\% & 0.029\\
Perturb $\mathcal{N}(-1, 1)$ & 98\% & 0.0021 \\
\bottomrule
\end{tabular}
\end{table*}

To rigorously test the properties of the coset circuit, we carefully designed \emph{causal} experiments to test specific properties of in the circuits. We observe a circuit's behavior over the entire data distribution (the full group $S_n$) and we see that our model of the circuit is consistent with the behavior of the true circuit. To confirm that our model of the circuit is correct, however, we need to ``break'' the circuit in targeted ways and test that it behaves in the way we predict. Neural circuits are complex enough that observational evidence is not enough. We aggregated runs over 128 $S_5$ models of different and recorded their average loss and accuracy. Initially, over the initial models without intervention, we have accuracy extremely close to 1.

\paragraph{Embedding Exchange} The left and right embeddings encode different information---membership in right and left cosets, respectively---and cannot be interchanged. To test this we intervene to switch the left and right embeddings. After the intervention, we observed a significant drop in accuracy to 0 and a rise in loss. This aligns with our expectation that the membership is an important property that can't be switched.

\paragraph{Switch Permutation Sign} The pre-activations are symmetric about the origin and the sign of the pre-activations does not matter, only whether or not the pre-activations is equal to zero. The \emph{relative} sign of the left and right pre-activations should matter a lot. To test this, we have three tests: changing the sign of just the left embeddings, just the right embeddings, and both embeddings. In the case where we change both the sign and with commutative property, we can still expect the left and right activation to cancel out. Therefore, we should see a near-perfect accuracy and near-0 loss. The result is as expected. When we change the sign of only the left or right embedding, such cancellation law doesn't hold anymore. Therefore, we observe a 0 accuracy in both cases. 

\paragraph{Absolute Value Non-linearity} The circuit can create a perfect 0-1 coset membership switch with multiple neurons on constructive interference, but every single neuron is noisy and fundamentally limited by the \texttt{ReLu} non-linearity. To test this, we replace the \texttt{ReLU} activation function with the absolute value function $x \mapsto |x|$. We observe perfect accuracy and an even lower loss that a half of the original loss. 


\paragraph{Distribution Change} It is essential to the functioning of each neuron that a large proportion of the pre-activations are close to zero. To test this we compare how adding noise from a $\mathcal{N}(-1, 1)$ and $\mathcal{N}(1, 1)$ affect the performance of the model. We can see that changing the distribution of the activation in Perturb $\mathcal{N}(-1, 1)$ changes the performance less significantly than  $\mathcal{N}(1, 1)$. This indicates that the coset requires 0 as a threshold value to decide the membership. 

The results of these interventions can be viewed in Table \ref{tab:interventions}

\section{The Group Composition via Representations Algorithm}
\label{sec:toy-model-universality}

Our experimental setup is identical to that of \citet{chughtai_toy_2023}, but our analysis led us to a different conclusion.\citet{chughtai_toy_2023} proposed the ``Group Composition via Representations'' (GCR) algorithm. They show that, given an irrep $\rho$ of $S_n$, $\argmax_{c \in S_n} \trace[\rho(a)\rho(b)\rho(^{-1}c)] = ab$ and propose that this is the algorithm the model is implementing. This requires that not only store the matrix irreps, but that the model \textit{perform the matrix multiplication} within its mechanism. We find that most of the evidence \citep{chughtai_toy_2023} put forward is also consistent with coset circuits. The other evidence we were not able to independently replicate. We also find evidence that, to our understanding, is not consistent with the GCR algorithm but is explained by coset circuits.

\subsection{Our Interpretation of the Evidence for GCR}

 \citet{chughtai_toy_2023} put forward four main pieces of evidence, which we restate here for clarity: (1) Correlation between the model's logits and characters of a learned representation $\rho$. (2) The embedding and unembedding layers function as a ``lookup table'' for the representations of the input elements $\rho(a),\; \rho(b)$ and the inverse of the target $\rho(c^{-1})$. (3) The neurons in the linear layer calculate the matrix product $\rho(a)\rho(b) = \rho(ab)$. (4) Ablations showing that the circuit they identify is responsible for the majority of the model's performance. Many of these points are equally consistent with the coset circuit and the other we could not find evidence for.

\textbf{Ablations} Though we do not perform all of the exact ablations that \citet{chughtai_toy_2023} perform, we also find that the weights that show high Fourier concentration and perform the coset multiplication are integral to the model's performance, see Section \ref{sec:ablations}.

\textbf{Irrep Look Up Table} We were not able to find any evidence that the embedding or unembedding layers function as a look-up table for any representation except for the one-dimensional sign representation.  We did find that the model's weights and activations \emph{concentrate} on specific irreps in the group Fourier basis. This is due, however, to concentration on cosets of specific subgroups, not because the matrix representations are realized anywhere in the weights. The relationship between functions that are constant on cosets and specific irreps is shown in Appendix \ref{constant_on_cosets}.

\textbf{Logit Attribution} The trace of a group representation is referred to as the ``character'' and often denoted $\chi$. We find that the model's logits correlate with the character $\chi_{\rho}(abc^{-1})$ when the irrep $\rho$ appears in the Fourier transform of the model's weights. This is not, however, because the model has implemented the matrix product $\rho(ab)\rho(c^{-1})$, but because the model is ``counting'' the number of cosets that $ab$ and $c$ are both in. We prove in \ref{counting_cosets}, if the cosets are of conjugate subgroups that have their Fourier transform concentrated on the irrep $\rho$ (as we observe for the models in question), then the number of shared cosets will also correlate with the characters of $\rho$.

\textbf{Matrix Multiplication of Irreps} We were not able to find any evidence that the linear layer implements matrix multiplication, again excluding scalar multiplication of the sign irrep.

\subsection{Evidence GCR Does Not Explain}
\label{appendix:evidence}
\textbf{Concentration on Cosets} In the standard basis the pre-activations of the overwhelming majority of neurons concentrate heavily on the cosets of subgroups. This is behavior is not predicted by the GCR algorithm. 

\textbf{The Difference Between Subgroups and Irreps} The GCR algorithm and coset circuit cannot be \textit{equivalent} because there is not, in fact, a one-to-one relationship between cosets and irreps. Most subgroups of $S_n$ have their Fourier transforms concentrate on more than a single group (see Table \ref{tab:s5_subgroups} for the spectral properties of all of the subgroups of $S_5$), indeed this needs to be the case as there are many more subgroups than irreps. Please refer to Table \ref{tab:sn_subgroups_sequence} for a concrete comparison and Appendix \ref{appendix:evidence} for an asymptotic analysis. We also observe coset circuits for some subgroups such as $D_{10}$\footnote{The dihedral group of order 10, the symmetry group of a pentagon.} will have coset circuits concentrated on both $(3,\;2)$ or $(2,\;2,\;1)$, depending on the run. The GCR algorithm would treat these as different circuits, though their behavior is in fact identical.

\textbf{Unembedding Correlations of Neurons} We observe that the correlation between in the unembedding of neurons that concentrate on the same coset is on average $81.4\%$ (see Table \ref{tab:unembed_corr}). The correlation between neurons concentrated only on the same conjugacy class of subgroup (e.g. $H_1$ and $H_2$) is on average $-0.2\%$. The neurons that represent subgroups in the same conjugacy class will oftentimes, though not always, be concentrated on the same irrep. The model is treating cosets together but the irreps and conjugacy classes separately.

\textbf{Coset Circuit Specific Causal Interventions} The property that the loss goes down when we replace the \texttt{ReLU} activation function with absolute value is a very strange property that GCR does not predict.

The concentration of the model's activations on irreps of $S_n$ is striking evidence and the GCR algorithm that \citep{chughtai_toy_2023} detail could indeed solve the problem of group multiplication. The coset circuit is also consistent with all of the evidence that \citep{chughtai_toy_2023} provide and is additionally consistent with evidence that the GCR algorithm does not explain.

\section{Discussion and Conclusion}
\label{sec:discussion}

We performed a circuit level analysis to discover the concrete mechanism a one layer fully connected network uses to solve group multiplication in $S_5$ and $S_6$. We showed that the model decomposes $S_5$ and $S_6$ into its cosets and uses this structural information to perfectly implement the task.

Though our work concerns a toy problem, we highlight a core takeaway that applies broadly to the field of interpretability: we must treat proposed neural mechanisms as theories until they have been thoroughly tested. 

When we identify what we believe to be a circuit within a larger network found via techniques such as \citep{conmy_towards_2023, goldowskydill2023localizing}, we have taken the first step towards mechanistically understanding how a model performs a task. The evidence we have for the circuit's role in that task is, however, fundamentally observational and correlational. The nodes in the circuit's computation graph are \textit{causally} connected, but the relationship between the action of those nodes is only \textit{observed} to be \textit{correlated} to a certain task with respect to a distribution. This is valuable information to have, but the understanding that it imparts is limited and must be recognized as such.

When beginning this project we quickly noticed that the activations of sub-circuits of our model were concentrated on specific irreps of $S_n$. It was only with additional investigation that we were able to attach semantic meaning to this phenomenon. We observed that the neurons concentrated on a single irrep were activating for specific subgroups. The hypothesis of the coset circuits had formed, but it was still only a theory. The facts we had observed were incontrovertible, but their reason was unclear. \textbf{It was only after performing the causal experiments detailed in Section \ref{sec:causal-interventions} that we became confident we understood the mechanism.} The simple reality is that more than one theory can be consistent with observational data, especially when that data only comes from a small subset of the full distribution. There is a long history of scholarship showing that interpretability techniques, including state-of-the-art, can give be misleading and contradictory results \citep{adebayo2020sanity, bolukbasi2021interpretability, casper2023red, doshivelez2017rigorous, friedman2023interpretability, hase2023does, Jain2019AttentionIN, makelov2023subspace, mcgrath2023hydra}.

In doing this work we had many advantages not available when interpreting real-world models: access to the entire distribution, an orthonormal basis for the function space of the network, and a relatively small model. The task of multiplication in $S_n$ is deterministic and very well studied, we had many mathematical tools to bring to bear in analyzing the model. Even still, this project was quite challenging and the circuits we found surprised us. Interpreting real models will be even difficult. We encourage future work to apply interpretability tools cautiously and validate observational results with rigorous experimental tests.

\section*{Impact Statement}
This paper presents work whose goal is to make the function and mechanisms of deep neural networks interpretable to humans. We present methods for reasoning about counterfactual and out of distribution behavior in the models that we train. Though our setting is too small to be directly relevant to real-world use cases, we hope that similar techniques will be able to test, audit, and monitor deep neural networks that have been deployed in the real world. We also present results that urge caution and humility when attempting to interpret neural networks. We believe that robust and effective interpretability techniques may mitigate some societal harms that could arise from the use of deep neural networks, but that mistakenly trusting illusory interpretability techniques could be disastrous. 

\section*{Acknowledgements}

We would like to thank Coreweave for donating the computing resources that we used to run all of our experiments, to Bilal Chughtai for helpful discussions we had throughout the project, and to Neel Nanda for telling us to ``not hold back for fear of offending [him].'' We would also like to thank Nora Belrose, Neils uit de Bos, Aidan Ewart, Sara Price, Hailey Schoelkopf, Cédric Simal, and Benjamin Wright for their helpful feedback on earlier drafts of this paper.


\bibliography{camera_ready.bib}
\bibliographystyle{icml2024}

\newpage
\appendix
\onecolumn

\section{Author Contributions}

\paragraph{Dashiell}
Wrote the code for training and for calculating the Group Fourier Transform over $S_n$. Performed the initial analyses of models trained on $S_5$ and initially found what we came to call the coset circuit. Designed and ran causal experiments to confirm our understanding of the coset circuit. Derived formal properties of the coset circuit. Participated in discussions throughout the project and in writing the paper.

\paragraph{Qinan}
Ran training jobs and performed the bulk of circuit analysis on $S_6$, designed and ran ablation experiments and causal interchange interventions, participated in discussions throughout the project and the  writing of the paper.

\paragraph{Honglu}
Derived formal properties of the coset circuit, participated in the discussions throughout the project, and the writing of the paper.


\paragraph{Stella} Helped scope the problem and identify and plan the core experiments. Advised on the interpretation of the analysis and the writing of the paper.

\section{Structure of the Appendix}

In the Appendix we provide more of the mathematical background needed to fully describe some of our results and techniques. In particular, we explain the Group Fourier Transform and how we used to to analyze our models. We do this because we believe it is of independent interest and also because it is necessary to fully explain where our results and those of \citet{chughtai_toy_2023} diverge.

In Appendix \ref{appendix:details} we go over the precise experimental set up of the models that we trained.

In Appendix \ref{appendix:group-theory} we introduce the necessary concepts from group theory needed to rigorously talk about the more mathematical aspects of our results.

In Appendix \ref{appendix:rep-theory} we introduce representation theory, representations of the symmetric group, and the group Fourier transform.

In Appendix \ref{appendix:coset-circuit} we return to the coset circuit and coset neurons, with the presentation grounded in the mathematical concepts introduced in Appendices \ref{appendix:group-theory} and \ref{appendix:rep-theory}.


Finally, in Appendix \ref{appendix:graphs} we present extra graphs that did not fit in the main paper and in Appendix \ref{appendix:irrep} we present a table of all of the conjugacy classes of subgroups of $S_5$.

\section{Experiment Details}
\label{appendix:details}
We conducted experiments focusing on the permutation group of $S_5$ and $S_6$. All models were trained on NVIDIA GeForce RTX 2080 GPUs. All models were implemented in PyTorch \citet{paszke_pytorch_2019} and trained with the Adam optimizer \citep{Kingma2014AdamAM} with a fixed learning rate of $0.001$, weight decay set to $1.0$, $\beta_{1} = 0.9$ and $\beta_{2} = 0.98$. At the beginning of each training run, the training set is sampled uniformly from all $|S_n|^2$ combinations of permutations. Each optimization step was made on the entire training set. Using our setup a single $S_5$ model trained in approximately 8 hours and a single $S_6$ model trained in approximately 100 hours, though multiple training jobs could be scheduled on a single GPU. Analysis and reverse engineering was performed with \citet{vink_pola-rspolars_2023, nanda2022transformerlens, harris2020array, GAP4, sage}.

\begin{table*}[htp]
\centering
\caption{Experiment hyperparameters.}\label{tab:exp_info}
\begin{tabular}{lccccc}
\toprule
Group  & \% Train Set & Num. Runs & Num. Epochs & Linear Layer Size & Embedding Size  \\
\midrule
$S_5$  & 40\% & 128 &  250,000 & 128 & 256 \\
$S_6$  & 40\% & 100 & 50,000 & 256 & 512  \\
\end{tabular}

\end{table*}

\section{Group Theory}
\label{appendix:group-theory}

In this section, let us recall some basic definitions and propositions in group theory that are relevant to this paper.

\subsection{Groups}
A group $G$ is a nonempty set equipped with a special element $e\in G$ called the \emph{identity} and a multiplication operator $\cdot$ satisfying the following:
\begin{itemize}
    \item (inverse) For each element $a\in G$, there exists an element $b\in G$ such that $a\cdot b = b\cdot a = e$.
    \item (identity) For each element $a\in G$, $a\cdot e = e\cdot a = a$.
    \item (associativity) For elements $a, b, c\in G$, we have $(a\cdot b)\cdot c = a\cdot (b\cdot c)$.
\end{itemize}

The inverse of $a\in G$ is denoted by $a^{-1}$.

\begin{example}
    The set of integers $\mathbb Z$ along with the addition $+$ form a group. The identity element is $0$. Also with the addition, the same is true for the set of rational numbers $\mathbb Q$, the set of real numbers $\mathbb R$ and the set of complex numbers $\mathbb C$.
\end{example}
\begin{example}
    The symmetric group introduced in Section \ref{sec:permutations-basics} along with the composition of permutations satisfies the group axioms. The identity element is the identity permutation leaving each element unchanged.
\end{example}
\begin{example}
    The set of natural numbers $\mathbb N$ and addition \emph{do not} form a group. The reason being that the inverse elements do not exist except for $0$.
\end{example}

\begin{definition}
    Given a group $G$, a subgroup $H$ is a subset of $G$ such that
    \begin{itemize}
        \item $a\cdot b\in H$ for any $a, b\in H$.
        \item $e\in H$.
        \item $a^{-1}\in H$.
    \end{itemize}
\end{definition}

One can check that $H$ along with the multiplication satisfies the group axiom as well.
$H$ being a subgroup of $G$ is denoted by $H \le G$.

\subsection{Cosets and double cosets}
\label{appendix:double-cosets}

\begin{definition}
Given a proper subgroup $H<G$ and an element $g\in G$, the set $gH:=\{gh~|~h\in H\}$ is called a left $H$-coset. Similarly, $Hg:=\{hg~|~h\in H\}$ is called a right $H$-coset.

$gH$ is sometimes called a coset if the subgroup $H$ is clear from the context. When we do not mention whether it is a left coset or a right coset, left coset is the default.
\end{definition}

\begin{lemma}
Two cosets $g_1H$ and $g_2H$ are either the same subset of $G$ or disjoint (i.e., $g_1H \bigcap g_2H = \emptyset$).
\end{lemma}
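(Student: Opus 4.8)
The plan is to prove the standard fact that the left cosets of a subgroup partition the group, i.e.\ any two left cosets $g_1H$ and $g_2H$ are either identical or disjoint. First I would reduce the claim to a single implication: it suffices to show that if $g_1H \cap g_2H \neq \emptyset$, then $g_1H = g_2H$. So I would assume there is a common element $x \in g_1H \cap g_2H$ and aim to derive the set equality.

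Next I would extract from $x \in g_1H$ and $x \in g_2H$ the existence of $h_1, h_2 \in H$ with $x = g_1 h_1 = g_2 h_2$. From this I get $g_1 = g_2 h_2 h_1^{-1}$, and since $H$ is a subgroup (closed under products and inverses, by the definition of subgroup in Section~\ref{sec:group-theory}), the element $h := h_2 h_1^{-1}$ lies in $H$, so $g_1 = g_2 h$ with $h \in H$. The core step is then to show $g_1 H \subseteq g_2 H$: an arbitrary element of $g_1H$ has the form $g_1 h'$ for some $h' \in H$, and $g_1 h' = g_2 h h' = g_2 (h h')$ with $h h' \in H$ again by closure of $H$, so $g_1 h' \in g_2 H$. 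By the symmetry of the hypothesis (swapping the roles of $g_1$ and $g_2$, using $g_2 = g_1 h^{-1}$ with $h^{-1}\in H$), the reverse inclusion $g_2 H \subseteq g_1 H$ follows identically, giving $g_1H = g_2H$.

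I would also want to note that this is exactly the statement that defines the equivalence relation $g_1 \sim g_2 \iff g_1^{-1} g_2 \in H$, and the lemma says the cosets are its equivalence classes; one could alternatively organize the proof by verifying reflexivity, symmetry, and transitivity of $\sim$ and invoking the general fact that equivalence classes partition a set. I would present the direct argument above since it is shorter and self-contained.

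There is no real obstacle here — the only thing to be careful about is using the full subgroup axioms (closure under both multiplication \emph{and} inverses, plus associativity, which is inherited from $G$) at each step rather than tacitly assuming them, and being explicit that the ``either equal or disjoint'' dichotomy is logically equivalent to the single conditional ``nonempty intersection implies equal.'' The analogous statement for right cosets $Hg_1, Hg_2$ follows by the mirror-image argument (or by applying the left-coset result in the opposite group $G^{\mathrm{op}}$), so I would remark on that rather than repeat it.
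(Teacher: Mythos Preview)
Your proof is correct and is the standard textbook argument. The paper itself states this lemma without proof, treating it as a well-known background fact in its group-theory appendix, so there is no approach to compare against; your direct ``nonempty intersection implies equality'' argument is exactly what one would expect here.
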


\begin{lemma}
If $G$ is a finite group, any two $H$-cosets have the same number of elements.
\end{lemma}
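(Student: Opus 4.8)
The plan is to exhibit, for each $g \in G$, an explicit bijection between the coset $gH$ and the subgroup $H$ itself, and then observe that $H = eH$ is a coset, so that every coset has exactly $|H|$ elements. Since $G$ is finite, $H$ is finite, so $|H|$ is a well-defined natural number, and once we know $|g_1H| = |H| = |g_2H|$ for arbitrary $g_1, g_2$, the claim follows immediately.

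Concretely, I would fix $g \in G$ and define the map $\phi_g \colon H \to gH$ by $\phi_g(h) = g\cdot h$. First I would check that $\phi_g$ is well-defined, i.e.\ lands in $gH$: this is immediate from the definition $gH = \{g\cdot h \mid h \in H\}$, which also shows $\phi_g$ is surjective. Next I would verify injectivity: if $\phi_g(h_1) = \phi_g(h_2)$, i.e.\ $g\cdot h_1 = g\cdot h_2$, then multiplying on the left by $g^{-1}$ and using associativity together with the inverse and identity axioms gives $h_1 = (g^{-1}\cdot g)\cdot h_1 = g^{-1}\cdot(g\cdot h_1) = g^{-1}\cdot(g\cdot h_2) = h_2$. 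Hence $\phi_g$ is a bijection and $|gH| = |H|$.

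Applying this to two arbitrary elements $g_1, g_2 \in G$ yields $|g_1H| = |H| = |g_2H|$, which is exactly the statement for left cosets. For right cosets the identical argument works using instead the map $\psi_g \colon H \to Hg$, $\psi_g(h) = h\cdot g$, with injectivity now obtained by right multiplication by $g^{-1}$.

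I do not expect any real obstacle here: the only things to be careful about are (i) invoking finiteness of $G$ only to guarantee that cardinalities are finite numbers (the bijection argument itself is purely set-theoretic and does not need finiteness), and (ii) keeping the left/right conventions straight so that the cancellation is performed on the correct side. Everything else is a direct application of the group axioms recalled above.
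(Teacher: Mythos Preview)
Your proof is correct and is the standard argument via the left-translation bijection $h \mapsto gh$. The paper itself states this lemma without proof, treating it as a background fact from elementary group theory, so there is nothing to compare against; your write-up fills in the omitted details in exactly the expected way.
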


As a result, one can pick suitable representative elements (but not unique) $g_1, \cdots, g_n\in G$, so that $g_1H, \cdots, g_nH$ form a partition of $G$. Because the cosets have equal sizes, we can also conclude that $|G|$ is always divisible by $|H|$.

\begin{definition}
Given two subgroups $H, L<G$ and an element $g\in G$, the set $HgL:=\{hgl~|~h\in H, l\in L\}$ is called the $(H, L)$-double coset, or the double coset if the pair $(H, L)$ is clear from the context.
\end{definition}

Double cosets enjoy the similar property as cosets:
\begin{lemma}
Two double cosets $Hg_1L$ and $Hg_2L$ are either the same or disjoint.
\end{lemma}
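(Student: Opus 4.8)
The plan is to mimic exactly the proof of the analogous statement for ordinary cosets (Lemma on $g_1H$ and $g_2H$ above), using the fact that membership in a fixed double coset is an equivalence relation. First I would define, for $x,y\in G$, the relation $x\sim y$ iff $x\in HyL$, i.e. iff $x = hyl$ for some $h\in H$, $l\in L$. I would check this is an equivalence relation: reflexivity is $x = exe$ with $e\in H$, $e\in L$; symmetry follows from $x=hyl \Rightarrow y = h^{-1}xl^{-1}$ together with $H,L$ being closed under inverses; transitivity follows from $x=h_1yl_1$, $y=h_2zl_2 \Rightarrow x = (h_1h_2)z(l_2l_1)$ together with $H,L$ being closed under products. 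Hence $G$ is partitioned into the equivalence classes of $\sim$.

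Next I would identify the equivalence class of $g$ with the double coset $HgL$: by definition $x\sim g$ iff $x\in HgL$, so the class of $g$ \emph{is} $HgL$. Since two equivalence classes are either equal or disjoint, the same is true of $Hg_1L$ and $Hg_2L$: either they coincide or $Hg_1L\cap Hg_2L=\emptyset$. That completes the argument.

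Alternatively, and perhaps more in keeping with the elementary flavour of this appendix, I would argue directly without naming the equivalence relation: suppose $Hg_1L$ and $Hg_2L$ are not disjoint, so there exist $h_1,h_2\in H$ and $l_1,l_2\in L$ with $h_1g_1l_1 = h_2g_2l_2$. Then $g_1 = (h_1^{-1}h_2)\,g_2\,(l_2l_1^{-1})$ with $h_1^{-1}h_2\in H$ and $l_2l_1^{-1}\in L$, so any element $hg_1l$ of $Hg_1L$ can be rewritten as $\bigl(h\,h_1^{-1}h_2\bigr)\,g_2\,\bigl(l_2l_1^{-1}\,l\bigr)\in Hg_2L$, giving $Hg_1L\subseteq Hg_2L$; the reverse inclusion is symmetric, so the two double cosets are equal.

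There is no real obstacle here — the statement is a routine generalization of the coset lemma, and the only thing to be slightly careful about is using closure of \emph{both} $H$ and $L$ under products and inverses (once on the left, once on the right) when manipulating the witnessing equation. I would present the direct-inclusion version as the main proof since it is self-contained and does not require first setting up the language of equivalence relations.
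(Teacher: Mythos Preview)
Your proof is correct; both the equivalence-relation argument and the direct-inclusion argument are standard and complete. Note, however, that the paper states this lemma without proof, so there is nothing to compare against --- your write-up simply fills in what the appendix omits.
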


As a result, $G$ can be similarly decomposed as a disjoint union of $(H, L)$-double cosets.
However, when $G$ is finite, $(H, L)$-double cosets do not always come with equal sizes. So the decomposition is not equal-sized.

For simplicity, we call the $(H, H)$-double coset the $H$-double coset.

\subsection{Normal Subgroups}
\begin{definition}
A subgroup $N$ is \emph{normal} in $G$, denoted $ N \trianglelefteq G$, if for any $g \in G$ and any $n \in N$, we have $gng^{-1} \in N$. 
\end{definition}

A subgroup is normal if and only if the left and right cosets are the same, i.e., for any $g \in G,\ gN = Ng$. Normal subgroups are important because they are precisely the groups for which the set of $N$-cosets $G/N$ has a natural group structure.

\begin{definition}
Given a group $G$ and a normal subgroup $N \trianglelefteq G$, the \emph{quotient group} $G/N$ is defined to be the set of $N$-cosets endowed with the multiplication given by $gN\cdot hN=ghN$ for any $g, h\in G$.
\end{definition}

The well-definedness of the multiplication is a consequence of $N$ being normal and its group axioms are straightforward to check.


\begin{example}
If $G$ is commutative (for every $g, h\in G$, we have $gh=hg$), every subgroup $H\le G$ is normal.
\end{example}

\begin{example}
If $G=S_n$, the subgroup $S_{n-1}$ fixing the first element is \emph{not} a normal subgroup. On the other hand, the alternating subgroup $A_n$ (consisting of even permutations) is a normal subgroup of $S_n$.
\end{example}

The double cosets of a normal subgroup are simply the usual cosets.

\begin{lemma}
Given a normal subgroup $H \trianglelefteq G$, the left $H$-coset and the right $H$-coset are in one-to-one correspondence. Furthermore, the set of $H$-double cosets is also in one-to-one correspondence to $H$-cosets.
\end{lemma}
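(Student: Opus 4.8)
The plan is to lean entirely on the characterization of normality already recorded just above the statement: $N \trianglelefteq G$ if and only if $gN = Ng$ for every $g \in G$. Once this is available, both assertions reduce to bookkeeping about which subsets of $G$ we are collecting into a family, with no serious computation needed.

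First I would dispatch the left/right coset correspondence. Because $gH = Hg$ for every $g \in G$, the family of left $H$-cosets and the family of right $H$-cosets are literally the same collection of subsets of $G$, so the identity map is the desired bijection. If one prefers an explicit map between the two indexed families, define $\phi(gH) = Hg$ and check well-definedness: if $g_1H = g_2H$ then $g_2 \in g_1H = Hg_1$, so $g_2 = hg_1$ for some $h \in H$, whence $Hg_2 = Hhg_1 = Hg_1$. Running the same argument with the roles of left and right interchanged gives injectivity, and surjectivity is immediate since every right coset $Hg$ is the image of $gH$; hence $\phi$ is a bijection.

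Next I would treat the double cosets, where the one line that does the work is, for any $g \in G$,
\[
HgH = H(gH) = H(Hg) = (HH)g = Hg,
\]
the middle equality being normality and $HH = H$ holding because $H$ is a subgroup (closed under multiplication, contains $e$). Thus every $(H,H)$-double coset equals a single right coset $Hg$, equivalently the single left coset $gH$; conversely each coset $Hg = HgH$ is itself a double coset. So the assignment $Hg \mapsto HgH$ is again the identity on this common family of subsets of $G$, which is the claimed one-to-one correspondence between $H$-double cosets and $H$-cosets.

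I do not expect a real obstacle here. The only point that needs any care is the well-definedness of any representative-dependent map, and the cleanest way to sidestep it is exactly the observation above: normality forces the three families — left cosets, right cosets, and $(H,H)$-double cosets — to coincide as families of subsets of $G$, so no choice of coset representative ever enters the argument.
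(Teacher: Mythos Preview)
Your proof is correct and follows essentially the same approach as the paper: use normality in the form $gH = Hg$ to identify left and right cosets, then compute $HgH = gHH = gH$ (the paper's version) or equivalently your $HgH = HHg = Hg$ to collapse every double coset to a single coset. Your treatment is more explicit about well-definedness and bijectivity, but the underlying one-line argument is the same.
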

\begin{proof}
By definition, $gHg^{-1} = H$. Therefore, $gH = Hg$. $HgH = gHH = gH$.
\end{proof}

\subsection{Conjugate Subgroups}
The cosets of a \emph{normal} subgroup $N \trianglelefteq G$ themselves form a group. If $x,\;y \in G$ and  $x \in gN$ but $\;y \in hN$, then   $xy \in ghN$. If $G$ is not abelian, however, many or even all subgroups are not normal and do not have this property. For a non-normal subgroup $H$, a $g \notin H$ gives rise to a different \emph{conjugate} subgroup $gHg^{-1}$.

In general, the relationship between the cosets of $H$ and $gHg^{-1}$ is complex, but they will have at least one left and one right coset in common: $Hg^{-1} = g^{-1}(gHg^{-1})$. Every right coset  $Hx$ will have a left coset pair $y(gHg^{-1})$ such that when multiplied, right coset on the left and left coset on the right, $Hxy(gHg^{-1}) = Hg^{-1}$, specifically when $xy = g^{-1}$.

This relationship between the cosets of pairs of conjugate subgroups is not as powerful as that of the cosets of normal subgroups, but conjugate subgroups are guaranteed to exist in non-abelian groups, whereas there are many simple groups without normal subgroups at all. 

This relationship between pairs of conjugate subgroups is also useful enough that it is used by every model we trained. 
In general, we have the following:
\begin{lemma}
For any $H\le G$ and an element $g\in G$, the set of conjugate elements $gHg^{-1}$ forms a subgroup of $G$.
\end{lemma}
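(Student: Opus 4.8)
The plan is to verify directly that $gHg^{-1} := \{\, ghg^{-1} \mid h \in H \,\}$ satisfies the three conditions in the definition of a subgroup recalled above: closure under multiplication, containment of the identity, and closure under inverses. Each check is a short computation relying only on the group axioms (associativity, in particular, to license regrouping) together with the assumption that $H$ is itself a subgroup.

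First I would prove closure. Given two elements $ghg^{-1}$ and $gh'g^{-1}$ of $gHg^{-1}$, with $h, h' \in H$, I would compute
\[
(ghg^{-1})(gh'g^{-1}) = gh(g^{-1}g)h'g^{-1} = g(hh')g^{-1},
\]
using $g^{-1}g = e$ and associativity. Since $H$ is a subgroup, $hh' \in H$, so the product lies in $gHg^{-1}$. For the identity, $e \in H$ and $geg^{-1} = e$, hence $e \in gHg^{-1}$. For inverses, given $ghg^{-1}$ I would note that $(ghg^{-1})^{-1} = gh^{-1}g^{-1}$, which one checks by multiplying the two expressions and cancelling; since $h^{-1} \in H$, the inverse again lies in $gHg^{-1}$.

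A slightly slicker alternative is to observe that conjugation by $g$, the map $c_g : G \to G$ sending $x \mapsto gxg^{-1}$, is a group homomorphism — in fact a bijection with inverse $c_{g^{-1}}$ — and then invoke the standard fact that the image of a subgroup under a homomorphism is a subgroup, so that $gHg^{-1} = c_g(H)$ is a subgroup. I would likely still spell out the elementary computation, since establishing that $c_g$ is a homomorphism requires essentially the same one-line calculation as the closure step.

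There is no genuine obstacle here; this is a routine verification. The only point requiring a modicum of care is bookkeeping the cancellations $g^{-1}g = e$ and $gg^{-1} = e$ and appealing to associativity when regrouping products — all guaranteed by the axioms stated at the beginning of this appendix.
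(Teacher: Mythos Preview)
Your proof is correct and entirely standard. The paper in fact states this lemma without proof (it is a textbook fact), so there is nothing to compare against; your direct verification of closure, identity, and inverses is exactly what one would write if a proof were required.
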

If the conjugate subgroup $gHg^{-1}$ is different than $H$, the left and right cosets $gH, Hg$ are different. 

The double coset circuits operate by first identifying a pair of different conjugate subgroups $H$ and $gHg^{-1}$. It exploits the fact that the left coset $gH$ and the right coset $(gHg^{-1})g$ are the same subset of $G$, which will be fully generalized and elaborated in the later sections.

\subsection{An important case}
When a group $G$ decomposes as only two disjoint $H$-double cosets, any pair of subgroups conjugate to $H$ shares a left coset with another's right coset.
\begin{lemma}
    Let $H_1, ..., H_n$ be conjugate subgroups of $G$, such that for each $H_i$ the double coset $H_{i}gH_{i}$ is equal to either $H_i$ or $G \setminus H_i$. Then for each pair of subgroups $H_i$ and $H_j$ there exists a $g \in G$ such that $H_{i}g = gH_{j}$. Moreover, the only double cosets of $H_{i}$ and $H_j$ are $H_{i}gH_j = gH_j$ and $H_{i}xH_{j} = G \setminus gH_j$.
\end{lemma}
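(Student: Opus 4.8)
The plan is to push the two-double-coset structure of each individual $H_i$ — which is exactly the hypothesis — through a conjugating element in order to obtain the claimed structure for the mixed pair $(H_i,H_j)$. To avoid a notational clash I will keep the dummy element in the hypothesis anonymous and reserve the letter $g$ for the conjugator.

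First I would use conjugacy directly: since $H_i$ and $H_j$ are conjugate there is some $g\in G$ with $gH_jg^{-1}=H_i$, equivalently $gH_j=H_ig$, and this already gives the first assertion (for $i=j$ one takes $g=e$). Writing $C:=gH_j=H_ig$, I would record the easy consequences that $C$ is simultaneously a left $H_j$-coset and a right $H_i$-coset, that $|C|=|H_i|$, and that $H_igH_j=H_i(H_ig)=H_ig=C$, so $C$ is a single $(H_i,H_j)$-double coset.

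Next I would set up the key bijection. Right translation $\varphi\colon G\to G$, $y\mapsto yg$, is a bijection of $G$, and using $H_ig=gH_j$ one checks that
\[
(H_i\,y\,H_i)\,g \;=\; H_i\,y\,(H_ig) \;=\; H_i\,y\,(gH_j) \;=\; H_i\,(yg)\,H_j ,
\]
so $\varphi$ carries the $(H_i,H_i)$-double coset through $y$ onto the $(H_i,H_j)$-double coset through $yg$. Since both families of double cosets partition $G$ and $\varphi$ is bijective, $\varphi$ induces a bijection between the set of $(H_i,H_i)$-double cosets and the set of $(H_i,H_j)$-double cosets. Now I would invoke the hypothesis — the only $(H_i,H_i)$-double cosets are $H_i$ and $G\setminus H_i$ — and conclude that the only $(H_i,H_j)$-double cosets are $H_ig=C=gH_j$ and $(G\setminus H_i)g=G\setminus(H_ig)=G\setminus gH_j$, which is the ``moreover'' claim. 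Spelled out pointwise: $x\in C$ forces $xg^{-1}\in H_i$, hence $H_ixH_j=(H_i(xg^{-1})H_i)g=H_ig=C$, while $x\notin C$ forces $xg^{-1}\notin H_i$, hence $H_i(xg^{-1})H_i=G\setminus H_i$ and $H_ixH_j=G\setminus C$.

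I do not expect a genuine obstacle here; the lemma is a bookkeeping argument. The only subtlety to watch is the left/right orientation: conjugacy must be used in the exact form $H_ig=gH_j$ so that \emph{right} translation by $g$ intertwines $H_i$-double cosets with $(H_i,H_j)$-double cosets, and it is the one-sided absorption $H_iH_i=H_i$ that makes the computation above collapse — the opposite convention $gH_i=H_jg$ would not cooperate. Everything else is immediate from the fact that right translation by a fixed group element is a bijection of $G$ that therefore permutes the partition of $G$ into double cosets.
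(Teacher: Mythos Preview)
Your proposal is correct and follows essentially the same route as the paper: both use the conjugator $g$ with $H_ig=gH_j$, compute $H_igH_j=H_ig$, and then reduce the general $(H_i,H_j)$-double coset $H_ixH_j$ to $(H_i(xg^{-1})H_i)g$ so that the two-double-coset hypothesis on $H_i$ finishes the job. Your bijection framing via right translation is a cleaner packaging of exactly this computation, and your case split on $x\in C$ versus $x\notin C$ is more precise than the paper's ``for $x\neq g$,'' but the argument is the same.
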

\begin{proof}
    If $i=j$, for any $h \in H_i$ the shared coset is the subgroup itself.
    If $i \neq j$, because $H_i$ and $H_j$ are conjugate, there exists a $g \in G$ such that $H_j = g^{-1}H_{i}g$. The left coset is equal to the right coset:
    $$gH_j = g(g^{-1}H_{i}g) = H_{i}g$$
    Notice that the double coset $H_{i}gH_j = H_{i}(H_{i}g) = H_{i}g$. But for $x \neq g$:
    \begin{align}
        H_{i}xH_j &= H_{i}xg^{-1}H_{i}g \\
        &= (G \setminus H_{i})g \\
        &= G \setminus H_{i}g
    \end{align}
\end{proof}

\section{Representation Theory}\label{appendix:rep-theory}
\subsection{Preliminaries}

\begin{definition}
Given a group $G$, a \emph{representation of $G$} is a group homomorphism $\rho_V: G\rightarrow GL(V)$ for some finite (but nonzero) dimensional vector space $V$ over a field $k$. When we do not specifically mention $k$, we use $\mathbb C$ as the default.
\end{definition}
In other words, a representation maps a group element $g$ to a linear operator $f(g): V\rightarrow V$ where $V$ is a vector space of dimension $d$, so that the group multiplication becomes compositions of linear operators ($f(g\cdot h) = f(g)\circ f(h)$). Without explicit specifications, all representations in this paper are assumed to be over complex numbers. Recall also that finite dimensional linear operators can be represented as matrices, and composition of linear operators is then given as matrix multiplication.

When the context is clear, sometimes we omit the subscript $V$ in the notation $\rho_V$.

The representations of finite groups have a rich and beautiful theory (see \citet{diaconis_persi_group_1988, fulton_representation_1991}). Here, we recall a few basic definitions and facts without going into details.
\begin{definition}
A representation $\rho_V: G\rightarrow GL(V)$ is a sub-representation of $\rho_W: G\rightarrow GL(W)$ if $V$ can be identified as a linear subspace of $W$ so that $\rho_W(g)$ restricts to $\rho_V(g)$ for all $g\in G$.
\end{definition}

\begin{example}
For any group $G$, the map $G\rightarrow GL(V)$ sending all elements to the identity matrix is a representation. When $\text{dim}(V) = 1$, we call it the \emph{trivial representation} of $G$.
\end{example}

\begin{definition}
Given two representations $\rho_V, \rho_W$ of $G$, the direct sum of vector spaces $V\oplus W$ admits a natural representation of $G$ by letting $\rho_V, \rho_W$ act on each component separately. We call this the direct sum of representations $\rho_V, \rho_W$, and denote it by $\rho_V \oplus \rho_W$.
\end{definition}

\begin{definition}
Similarly, given two representations $\rho_V, \rho_W$, the tensor product $V\otimes W$ admits a natural representation of $G$ by acting on $V, W$ separately and extend by linearity. We call this the tensor product of representations $\rho_V, \rho_W$, and denote it by $\rho_V \otimes \rho_W$.
\end{definition}

\begin{definition}
A representation $\rho$ of a group $G$ is \emph{irreducible}, if it does not have sub-representations other than $\rho$.
\end{definition}

We denote the set of all irreducible representations of $G$ by $\irr(G)$

\begin{lemma}
A representation $\rho$ of a finite group $G$ is a direct sum of irreducible representations.
\end{lemma}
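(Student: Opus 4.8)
The plan is to prove this by induction on $\dim V$, using the averaging argument of Maschke's theorem; the argument works because all representations here are over $\mathbb C$, so $|G|$ is invertible. For the base case, if $\rho = \rho_V$ is irreducible there is nothing to prove. Otherwise, by the definition of irreducibility, $\rho_V$ has a sub-representation $\rho_W$ with $0 \neq W \subsetneq V$, and it suffices to exhibit a $G$-stable complement $W'$ with $V = W \oplus W'$: then $\rho_V \cong \rho_W \oplus \rho_{W'}$, both summands have strictly smaller dimension, and the induction hypothesis finishes the job.

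To build the complement, first choose an arbitrary linear projection $\pi : V \to V$ with image $W$ (forgetting the group action entirely), and then symmetrize it:
\[
\pi_0 \;:=\; \frac{1}{|G|}\sum_{g\in G}\rho_V(g)\,\pi\,\rho_V(g)^{-1}.
\]
I would then verify three things. (i) Since $W$ is $G$-stable, each term $\rho_V(g)\pi\rho_V(g)^{-1}$ has image in $W$, so $\operatorname{im}\pi_0 \subseteq W$; and for $w \in W$ one has $\rho_V(g)^{-1}w \in W$, hence $\pi\rho_V(g)^{-1}w = \rho_V(g)^{-1}w$, giving $\pi_0 w = w$. Thus $\pi_0$ is a projection onto $W$. (ii) A change of summation variable shows $\rho_V(h)\pi_0 = \pi_0\rho_V(h)$ for all $h \in G$, i.e. $\pi_0$ is $G$-equivariant. (iii) Consequently $W' := \ker\pi_0$ is a $G$-stable subspace, and because $\pi_0$ is a projection onto $W$ we obtain $V = W \oplus W'$ as representations.

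The main obstacle is precisely steps (i)--(iii): the existence and equivariance of $\pi_0$. Everything rests on being allowed to divide by $|G|$, which is exactly why the statement holds over $\mathbb C$ — over a field whose characteristic divides $|G|$ the averaging breaks down and the lemma is in fact false. Once $\pi_0$ is in hand, the verification is a routine reindexing computation and the induction closes immediately. An alternative route, if one prefers to avoid projections, is to equip $V$ with a $G$-invariant Hermitian inner product (again by averaging an arbitrary one over $G$) and take $W'$ to be the orthogonal complement of $W$; the inductive structure is identical.
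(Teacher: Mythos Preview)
Your proof is correct: this is the standard Maschke-averaging argument, and the three verifications (i)--(iii) are exactly what is needed. The paper itself does not supply a proof of this lemma; it is stated as a background fact in the representation-theory appendix with references to \cite{diaconis_persi_group_1988, fulton_representation_1991}, so there is no paper-side argument to compare against. Your write-up would serve perfectly well as the omitted proof.
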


\begin{example}
The trivial representation of $G$ is irreducible.
\end{example}

\begin{example}
    The permutation representation maps $S_n \rightarrow GL(\mathbb{C}^3)$, i.e. $3\!\times\!3$ matrices with a single $1$ in each row and column and zeros everywhere else.
    \[
        (2 \; 1 \; 3 ) \mapsto \begin{pmatrix}
        0 & 1 & 0 \\
        1 & 0 & 0 \\
        0 & 0 & 1 \\
    \end{pmatrix} \quad
        (3 \; 2 \; 1) \mapsto\begin{pmatrix}
        0 & 0 & 1 \\
        0 & 1 & 0 \\
        1 & 0 & 0 \\
    \end{pmatrix}
    \]

    You can see that the matrices of the permutation representation act on the basis vectors of $\mathbb{C}^3$:
    \[
    \begin{pmatrix}
        0 & 0 & 1 \\
        0 & 1 & 0 \\
        1 & 0 & 0 \\
    \end{pmatrix}
    \begin{pmatrix}
        x \\ y \\ z
    \end{pmatrix} = 
    \begin{pmatrix}
        z \\ y \\ x
    \end{pmatrix}
    \]

    What it means to be a representation is that the group multiplication becomes matrix multiplication, so just as $(2 \; 1 \; 3 )(3 \; 2 \; 1) = (2 \; 3 \; 1 )$,
    \[
    \begin{pmatrix}
        0 & 1 & 0 \\
        1 & 0 & 0 \\
        0 & 0 & 1 \\
    \end{pmatrix}
    \begin{pmatrix}
        0 & 0 & 1 \\
        0 & 1 & 0 \\
        1 & 0 & 0 \\
    \end{pmatrix} = \begin{pmatrix}
        0 & 0 & 1\\
        1 & 0 & 0 \\
        0 & 1 & 0
    \end{pmatrix}
    \]
\end{example}

\begin{example}
    The permutation representation is \emph{reducible}, because there is a subspace of $\mathbb{C}^3$ that is invariant to it's action.
    \[
    \begin{pmatrix}
        0 & 0 & 1 \\
        0 & 1 & 0 \\
        1 & 0 & 0 \\
    \end{pmatrix}
    \begin{pmatrix}
        x \\ x \\ x
    \end{pmatrix} = 
    \begin{pmatrix}
        x \\ x \\ x
    \end{pmatrix}
    \]

    Note that there is no permutation matrix acting on the vector $\begin{pmatrix}
        x & x & x
    \end{pmatrix}^T$ that will change it, because all of the components are equal.

    As it turns out, there are no \emph{irreducible} representations of $S_3$ that are three-dimensional. The largest irrep of $S_3$ is $\rho_{(2,1)}$, which is made of $2\!\times\!2$ matrices. The matrices of the $(2,1)$ irrep of $S_3$ are as follows:

    \begin{flalign*}
        (1 \; 2 \; 3) &\mapsto \begin{pmatrix}
            1 & 0 \\
            0 & 1 
        \end{pmatrix} &
        (2 \; 1 \; 3 ) &\mapsto \begin{pmatrix}
            -1 & 0 \\
            0 & 1 
        \end{pmatrix} &
        (3 \; 2 \; 1) &\mapsto   
        \begin{pmatrix}
            1/2 & -\sqrt{3}/2 \\
            \sqrt{3}/2 & -1/2
        \end{pmatrix} \\
        (1 \; 3 \; 2) &\mapsto   
        \begin{pmatrix}
            -1/2 & \sqrt{3}/2 \\
            \sqrt{3}/2 & 1/2
        \end{pmatrix} & 
        (3 \; 1 \; 2) &\mapsto \begin{pmatrix}
            -1/2 & \sqrt{3}/2 \\
            -\sqrt{3}/2 & -1/2
        \end{pmatrix} &
        (2 \; 3 \; 1) &\mapsto \begin{pmatrix}
            -1/2 & \sqrt{3}/2 \\
            -\sqrt{3}/2 & -1/2
        \end{pmatrix}
    \end{flalign*}
    
    We leave it as an exercise to the reader to verify that $\rho_{(2,1)}(2 \; 1 \; 3 )\rho_{(2,1)}(3 \; 2 \; 1) = \rho_{(2,1)}(2 \; 3 \; 1)$.

\end{example}

Trace is an important notion in linear algebra. Taking trace of a representation induces an important map from $G$ to $\mathbb C$.

\begin{definition}
Let $\rho_V$ be a representation of $G$. The \emph{character} of $\rho_V$ is a map $\chi(\rho_V):G\rightarrow \mathbb C$ given by $\chi(\rho_V)(g) = \trace(\rho_V(g))$.
\end{definition}

\begin{lemma}
The character $\chi(\rho_V)$ takes the same value on a conjugacy class of $G$. In other words, $\chi(\rho_V)(h) = \chi(\rho_V)(ghg^{-1})$.
\end{lemma}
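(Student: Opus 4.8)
The plan is to reduce the statement to the cyclic invariance of the trace together with the fact that $\rho_V$ is a group homomorphism into $GL(V)$. First I would unpack the left-hand side using the definition of the character: $\chi(\rho_V)(ghg^{-1}) = \trace\bigl(\rho_V(ghg^{-1})\bigr)$. Since $\rho_V$ is a homomorphism, $\rho_V(ghg^{-1}) = \rho_V(g)\,\rho_V(h)\,\rho_V(g^{-1})$, and because $\rho_V(g)\rho_V(g^{-1}) = \rho_V(e) = \mathrm{id}_V$, we may write $\rho_V(g^{-1}) = \rho_V(g)^{-1}$; this is where I use that the image lies in $GL(V)$, so $\rho_V(g)$ is genuinely invertible.

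Next I would invoke the standard linear-algebra fact that for any invertible $A$ and any $B$ on the same finite-dimensional space, $\trace(ABA^{-1}) = \trace(B)$, which itself follows from $\trace(XY) = \trace(YX)$ by taking $X = A$, $Y = BA^{-1}$. Applying this with $A = \rho_V(g)$ and $B = \rho_V(h)$ gives
\[
\chi(\rho_V)(ghg^{-1}) = \trace\bigl(\rho_V(g)\rho_V(h)\rho_V(g)^{-1}\bigr) = \trace\bigl(\rho_V(h)\bigr) = \chi(\rho_V)(h),
\]
which is exactly the claim. Since any two elements in the same conjugacy class differ by conjugation by some $g \in G$, this shows $\chi(\rho_V)$ is constant on conjugacy classes.

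I do not anticipate a genuine obstacle here: the only point requiring a word of care is the passage $\rho_V(g^{-1}) = \rho_V(g)^{-1}$, which is immediate from the homomorphism property and the definition of a representation as mapping into $GL(V)$, and the cyclicity of trace, which is a basic fact about finite-dimensional vector spaces (already available since all representations in the paper are finite-dimensional). If one wanted to be fully self-contained one could also remark that the statement is basis-independent, so it suffices to check it after choosing a basis and working with matrices, but this is not strictly necessary.
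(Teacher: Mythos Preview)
Your proof is correct and is exactly the standard argument: the homomorphism property gives $\rho_V(ghg^{-1}) = \rho_V(g)\rho_V(h)\rho_V(g)^{-1}$, and cyclicity of trace finishes it. The paper actually states this lemma without proof (it is treated as a basic background fact), so there is nothing to compare against; your write-up would serve perfectly well as the omitted justification.
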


To distill this property for a wider range of functions, we have the following definition:
\begin{definition}
Let $f:G\rightarrow \mathbb C$ be a map. If $f(h)=f(ghg^{-1})$ for any $g, h\in G$, $f$ is called a \emph{class function}.
\end{definition}

For a finite group $G$, the set of class functions form a finite-dimensional vector space. There is an important inner product between class functions.
\begin{definition}
The inner product of two class functions $\phi, \psi$ are defined as:
\[
\langle \phi, \psi\rangle = \dfrac{1}{|G|}\sum\limits_{g\in G} \phi(g)\overline{\psi(g)}.
\]
\end{definition}

As we require the class functions to take the same values on conjugacy classes, the dimension of the vector space of class functions is equal to the number of conjugacy classes in $G$. On the other hand, we have the following important theorem:
\begin{theorem}
The characters of $\irr(G)$ forms an orthonormal basis in the vector space of class functions.
\end{theorem}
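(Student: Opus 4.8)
The plan is to establish the two halves of the statement separately: first that the characters of distinct irreducible representations are orthonormal under the inner product $\langle \phi, \psi\rangle = \frac{1}{|G|}\sum_{g\in G}\phi(g)\overline{\psi(g)}$, and second that they span the space of class functions; since an orthonormal set is automatically linearly independent, the two together give the ``orthonormal basis'' conclusion. Both halves rest on \emph{Schur's lemma} (a $G$-equivariant linear map between irreducible representations is either zero or an isomorphism, and a $G$-equivariant endomorphism of a complex irreducible representation is a scalar multiple of the identity), which I would state and prove in a line or two, together with the \emph{averaging trick}: for any linear map $\phi: V\rightarrow W$ between representations, $\widetilde\phi := \frac{1}{|G|}\sum_{g\in G}\rho_W(g)\,\phi\,\rho_V(g)^{-1}$ is $G$-equivariant. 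I would also record the standard reduction that every representation of a finite group may be taken unitary (by averaging an arbitrary inner product on $V$), so that $\rho(g^{-1}) = \rho(g)^{*}$ is the conjugate transpose; this is what lines the matrix-coefficient identities up with the inner product above.

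For orthonormality, fix irreducibles $\rho_V,\rho_W$ and apply the averaging trick to the elementary linear maps (matrix units) $V\rightarrow W$. Schur's lemma forces $\widetilde\phi = 0$ when $V\not\cong W$ and $\widetilde\phi = \frac{\trace\phi}{\dim V}\,\mathrm{Id}$ when $V=W$. Reading off matrix entries yields the Schur orthogonality relations $\frac{1}{|G|}\sum_{g}(\rho_V(g))_{ij}\,\overline{(\rho_W(g))_{kl}} = \frac{1}{\dim V}\,\delta_{VW}\,\delta_{ik}\,\delta_{jl}$, and summing over $i=j$ and $k=l$ gives $\langle\chi(\rho_V),\chi(\rho_W)\rangle = \delta_{VW}$. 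Combined with the earlier lemma that each $\chi(\rho_V)$ is a class function, this shows the irreducible characters form an orthonormal subset of the class functions.

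For spanning, I would show that any class function $f$ with $\langle f,\chi(\rho_V)\rangle = 0$ for every irreducible $V$ must vanish. Given such an $f$, form the operator $T_{V,f} := \sum_{g\in G}\overline{f(g)}\,\rho_V(g)$ on an irreducible $V$. Because $f$ is a class function, the substitution $g\mapsto hgh^{-1}$ shows $\rho_V(h)\,T_{V,f}\,\rho_V(h)^{-1} = T_{V,f}$ for all $h$, so by Schur $T_{V,f} = \lambda\,\mathrm{Id}$; taking traces, $\lambda\dim V = \sum_g\overline{f(g)}\,\chi(\rho_V)(g) = |G|\,\overline{\langle f,\chi(\rho_V)\rangle} = 0$, hence $T_{V,f} = 0$. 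By the earlier lemma that every representation is a direct sum of irreducibles, $T_{W,f} = 0$ for \emph{every} representation $W$, in particular for the regular representation on $\mathbb C[G]$; applying it to the basis vector $e_e$ indexed by the identity gives $\sum_g \overline{f(g)}\,e_g = 0$, so $f\equiv 0$. Thus the orthogonal complement of the span of the irreducible characters inside the class functions is trivial, so they span.

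The main obstacle is the spanning direction: it is the only step that genuinely uses global structure (the regular representation plus complete reducibility), and one must be careful with the complex conjugates in both the inner product and in $T_{V,f}$, since it is precisely the class-function hypothesis on $f$ that makes $T_{V,f}$ intertwine the $G$-action. The orthonormality half, by contrast, is a fairly mechanical consequence of Schur's lemma once the averaging trick and the unitarizability reduction are in place.
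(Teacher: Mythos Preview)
Your proof is correct and follows the standard textbook argument (Schur's lemma plus averaging for orthonormality, and the regular-representation trick for spanning). The paper, however, does not actually prove this theorem: it is stated in the representation-theory appendix as background, alongside references to \cite{diaconis_persi_group_1988, fulton_representation_1991}, with no proof given. So there is nothing to compare against; your write-up simply supplies what the paper omits.
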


\begin{lemma}
For a finite group $G$, $\irr(G)$ is a finite set. Furthermore, the order of $\irr(G)$ is equal to the number of conjugacy classes in $G$.
\end{lemma}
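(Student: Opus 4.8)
The plan is to derive the lemma directly from the two facts already recorded above: that the characters of the irreducible representations of $G$ form an orthonormal basis of the vector space of class functions, and that this space is finite-dimensional. Throughout I read $\irr(G)$ as the set of \emph{isomorphism classes} of irreducible representations, since representations differing only by a change of basis are indistinguishable here.

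First I would check that $\rho \mapsto \chi(\rho)$ descends to a well-defined map from $\irr(G)$ into the space of class functions: $\chi(\rho)$ is a class function by the preceding lemma, and if $\rho_V \cong \rho_W$ via an invertible intertwiner $T$, then $\rho_W(g) = T\rho_V(g)T^{-1}$, so $\trace \rho_W(g) = \trace \rho_V(g)$ for every $g$ and the two share the same character. Next I would prove this map is injective: if $\rho_V$ and $\rho_W$ were non-isomorphic irreducibles with $\chi(\rho_V) = \chi(\rho_W)$, then orthonormality would force simultaneously $\langle \chi(\rho_V), \chi(\rho_W)\rangle = 0$ and $\langle \chi(\rho_V), \chi(\rho_V)\rangle = 1$, which is absurd after substituting $\chi(\rho_W) = \chi(\rho_V)$. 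So $\irr(G)$ injects into the space of class functions via characters, and by the stated theorem its image is in fact a basis of that space; since a basis of a finite-dimensional vector space is a finite set, $\irr(G)$ is finite and $|\irr(G)|$ equals the dimension of the space of class functions.

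It then remains to compute that dimension. Letting $c_1, \dots, c_k$ be the conjugacy classes of $G$, I would exhibit the indicator functions $\mathbf{1}_{c_1}, \dots, \mathbf{1}_{c_k}$ as a basis: each is a class function, they are linearly independent because their supports are nonempty and pairwise disjoint, and any class function $f$ can be written $f = \sum_{i} f(g_i)\,\mathbf{1}_{c_i}$ for arbitrary representatives $g_i \in c_i$. Hence the space of class functions has dimension $k$, and therefore $|\irr(G)| = k$, the number of conjugacy classes.

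The proof is short precisely because the substance is carried by the assumed orthonormal-basis theorem. If one had to build that theorem from scratch, the main obstacle would be its completeness half — that the irreducible characters \emph{span} every class function — which needs Schur's lemma together with an averaging/projection argument to show that no nonzero class function is orthogonal to all irreducible characters. As the statement stands, the only genuine work is the injectivity of $\rho \mapsto \chi(\rho)$ and the routine identification of the dimension of the class-function space with the number of conjugacy classes.
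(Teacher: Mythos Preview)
Your proposal is correct and follows exactly the route the paper sets up: the paper does not write out a separate proof for this lemma, but places it immediately after remarking that the space of class functions has dimension equal to the number of conjugacy classes and stating the theorem that the irreducible characters form an orthonormal basis of that space, so the lemma is meant as the obvious corollary of combining those two facts. Your write-up simply fills in the details (well-definedness and injectivity of $\rho\mapsto\chi(\rho)$, and the explicit indicator-function basis for class functions) that the paper leaves implicit.
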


\section{Fourier transform over finite groups}
\label{appendix:group-fourier-transform}


Despite being mostly perceived as a powerful tool in physics and engineering, the Fourier transform has also been successfully applied in group theory thanks to its generalization to locally compact abelian groups as well as an analog over finite groups. The purpose the group Fourier transform serves is largely analogous to the one served by the classical Fourier transform: it provides an alternate orthogonal basis with which to analyze functions from a group $G$ to either $\mathbb{R}$ or $\mathbb{C}$.

To motivate the transition from the classical Fourier theory to the Fourier theory over groups, we start with a brief recall of the definitions.

The classical Fourier transform over real numbers converts a complex-valued Lebesgue-integrable function $f: \mathbb R\rightarrow \mathbb C$ into a function from the complex unit circle $S^1$ to $\mathbb{C}$ with following formula:
\begin{equation}\label{eqn:classical-ft}
\hat f(\xi) = \displaystyle\int_{-\infty}^{\infty} f(x)e^{-2\pi i\xi x}dx.
\end{equation}

Taking one step further in abstraction, we note that $e^{-2\pi i\xi x}$ as a function of $x$ has the defining properties of turning additions into multiplications (being a group homomorphism) and always having complex norm $1$:
\begin{align*}
    e^{-2\pi i\xi (x_1 + x_2)} &= e^{-2\pi i\xi x_1} \cdot e^{-2\pi i\xi x_2}, \\
|e^{-2\pi i\xi x}| &= 1.\\
\end{align*}

We call such functions the \emph{characters} of $\mathbb R$, though they are often thought of as \emph{frequencies}. One can prove that all characters of $\mathbb R$ can be written as $e^{-2\pi i\xi x}$ for a suitable $\xi\in\mathbb R$.

Looking back at \eqref{eqn:classical-ft}, the properties we need in order to define the Fourier transform over $\mathbb R$ are:
\begin{itemize}
    \item $\mathbb R$ has the Lebesgue measure (allowing for integration to happen).
    \item $\mathbb R$ is a group (so that the characters make sense as group homomorphisms from $\mathbb R$ to the unit circle group $S^1\subset\mathbb C$).
\end{itemize}

Now, if we are given a finite group $G$, the Fourier transform of a finite group is an operator converting a map $f:G\rightarrow \mathbb C$ into a function between $\irr(G)$ and the set of linear operators $M(V)$.

\begin{definition}
Given a group $G$, the \emph{Fourier transform} of a map $f:G\rightarrow \mathbb C$ is a function $\hat{f}$ from $\irr(G)$ to the union of $M(\mathbb C^n)$ for all $n$ such that 
\[
\hat{f}(\rho) = \sum\limits_{a\in G} f(a)\rho(a)
\]
for an irreducible representation $\rho$.
\label{ft}
\end{definition}

The analogy comes from the following similar facts:
\begin{itemize}
    \item $G$, as a finite set, has the invariant discrete measure (where the ``integration" becomes the sum).
    \item $G$ is a group, and the irreps $\rho$ are in a sense the "smallest" group homomorphisms from $G$ to $GL(n, \mathbb C)$ (note that the images of $\rho$ similarly have complex-norm-$1$ determinants due to $G$ being a finite group).
\end{itemize}

For more details and applications, one can refer to, for example, \citet{steinfourier}. We would like to note that there is also an inverse transform that restores the original function $f$ from $\hat{f}$:
\begin{equation} 
     f(g) = \frac{1}{|G|} \sum_{\rho \in \irr(G)} d_{\rho} \trace [ \hat{f}(\rho) \rho(g^{-1}) ]
     \label{ift}
\end{equation}

\section{The Coset Circuit (with more math)}
\label{appendix:coset-circuit}

We did not introduce it in the main body of our paper because it would distract from the core of our results, but for the first half of our investigation the Fourier transform over the symmetric group was integral to our investigation. We were building directly on \citep{chughtai_toy_2023} who had shown striking results around the weights of single-layer models showing high degrees of correlation with the irreps of the symmetric group. We wished to cast those results in the language of the group Fourier transform. Even when we realized that the mechanism of the model was based around cosets it became extremely important to understand why our coset circuit was so concentrated in Fourier space.

\subsection{Harmonic Analysis on the Symmetric Group}
\label{sec:fourier-analysis}

The presentation in the Appendix \ref{appendix:rep-theory} was given in terms of functions on $\mathbb{C}$ because it is required for arbitrary groups. For $S_n$ all of the irreps are rational \citep{fulton_representation_1991} and the Fourier transform of functions on $S_n$ can safely be defined over $\mathbb{R}$.

In this section we describe how we use the Fourier transform to analyze the weights and activations of an MLP. The inputs to the model are two one-hot vectors, $\mathbf{x}_{l},\;\mathbf{x}_{r}$, which multiply the embedding matrices $\mathbf{E}_{l}\mathbf{x}_{l}$ and $\mathbf{E}_{r}\mathbf{x}_{r}$. $\mathbf{E}_{l}$ and $\mathbf{E}_{r}$ are $d \times |G|$ matrices, where $d$ is the embedding dimension and $|G|$ is the size of the group. The \emph{columns} are the embedding vectors for a single element $g \in G$. The normal approach would be to try and look at the column spaces of $\mathbf{E}_l$ and $\mathbf{E}_r$, as these columns are the inputs to the model. However, since each \emph{row} of $\mathbf{E}_l$ and $\mathbf{E}_r$ and each value of that row is associated with a single element of $G$, we instead treat each \emph{row} of the embedding as a function $f: G \rightarrow \mathbb{R}$.

In fact, anywhere in the model where a matrix or set of activations has $|G|$ in the shape we can expand into the Fourier basis.
For non-abelian groups, each Fourier frequency is an irrep, and the Fourier transform for each irrep is matrix-valued. This is, on its face, \emph{less} interpretable than what we started with. Following the techniques outlined in \citet{diaconis_persi_group_1988},  however, we can expand the function at each element $g \in G$ into a new Fourier basis. Concretely, if our function $f: G \rightarrow \mathbb{R}$ is represented as a vector, we know from \ref{ift} that each element of the vector is a sum of the Fourier components:

\[
\begin{bmatrix}
    f(g_{1}) \\
    f(g_{2}) \\
    \vdots \\
    f(g_{|G|})
\end{bmatrix} = 
\frac{1}{|G|}
\begin{bmatrix}
    \sum_{\rho} d_{\rho}\trace[\hat{f}(\rho)\rho(g^{-1}_{1})] \\
    \sum_{\rho} d_{\rho}\trace[\hat{f}(\rho)\rho(g^{-1}_{2})] \\
    \vdots \\
    \sum_{\rho} d_{\rho}\trace[\hat{f}(\rho)\rho(g^{-1}_{|G|})]
\end{bmatrix}
\]

We can keep track of all of the Fourier components at once by purposefully not completing the sum from \ref{ift}), but instead keep each term into a new dimension:

\[
\frac{1}{|G|}
\begin{bmatrix}
    d_{\rho_{1}}\trace[\hat{f}(\rho_{1})\rho_{1}(g^{-1}_{1})] & \dots & d_{\rho_{k}}\trace[\hat{f}(\rho_{k})\rho_{k}(g^{-1}_{1})] \\
    d_{\rho_{1}}\trace[\hat{f}(\rho_{1})\rho_{1}(g^{-1}_{2})] & \dots & d_{\rho_{k}}\trace[\hat{f}(\rho_{k})\rho_{k}(g^{-1}_{2})] \\
    \vdots & & \vdots \\
    d_{\rho_{1}}\trace[\hat{f}(\rho_{1})\rho_{1}(g^{-1}_{|G|})] & \dots & d_{\rho_{k}}\trace[\hat{f}(\rho_{k})\rho_{k}(g^{-1}_{|G|})] \\
\end{bmatrix} 
\]

Though this may seem like it is only making the data more complicated, it gives us many tools for analyzing the data. In particular, it turns out that the weights and activations are sparse in this new basis, which gives us a small path forward in analyzing the mechanisms.

\begin{corollary}
If $H_i$ and $H_j$ are conjugate subgroups of $G$ such that the only two double cosets are $H_{i}gH_{j}$ and $H_{i}H_{j}$, then each right coset $H_{i}x$ has a paired left coset $yH_j$ where $y = x^{-1}g$ such that for all $h_{x} \in H_{i}x$ and $h_y \in yH_j$,  $h_{x}h_y \in H_{i}gH_{j}$
\end{corollary}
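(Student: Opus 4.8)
The plan is to prove the statement by a direct computation with coset representatives, since once the pairing $y = x^{-1}g$ is substituted the conclusion reduces to an elementary cancellation. First I would fix an arbitrary right coset $H_i x$ and set $y = x^{-1}g$, so the paired left coset is $yH_j = x^{-1}gH_j$. An arbitrary element $h_x \in H_i x$ can be written as $h_x = a x$ with $a \in H_i$, and an arbitrary element $h_y \in yH_j$ can be written as $h_y = x^{-1}g b$ with $b \in H_j$.

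Then I would simply multiply: $h_x h_y = (a x)(x^{-1}g b) = a g b$, and since $a \in H_i$ and $b \in H_j$ this lies in $H_i g H_j$ by definition of the double coset, which is exactly the claim. The hypothesis that $G$ decomposes into only the two double cosets $H_i H_j$ and $H_i g H_j$ is what singles out $x^{-1}g$ as the \emph{correct} representative for the paired left coset; concretely it is the special case of the preceding lemma (the ``important case'' subsection) applied to the conjugate pair $H_i, H_j$, where one has $gH_j = H_i g$. I would remark that one can alternatively read the corollary straight off that lemma: using $gH_j = H_i g$, the paired left coset is $x^{-1}gH_j = x^{-1}H_i g$, whose elements have the form $x^{-1}h g$ with $h \in H_i$, so $(a x)(x^{-1}h g) = a h g \in H_i g \subseteq H_i g H_j$.

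The only point needing a little care --- and the closest thing to an obstacle here --- is the bookkeeping of which side each subgroup sits on: $H_i x$ is a right coset, $yH_j$ is a left coset, and the product is formed ``right coset on the left, left coset on the right,'' matching the convention of the preceding lemma and of Table~\ref{tab:s4-mult}. Once the representatives are written in the form $a x$ and $x^{-1}g b$, the cancellation of $x x^{-1}$ is immediate and nothing further needs to be checked, so I would keep the proof to these two or three lines.
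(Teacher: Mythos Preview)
Your proof is correct and matches the paper's intent: the paper states this result as a corollary with no accompanying proof, treating the containment $h_x h_y = (ax)(x^{-1}gb) = agb \in H_i g H_j$ as immediate from the preceding lemma in the ``important case'' subsection. Your observation that the two-double-coset hypothesis is not actually needed for the literal containment, but only to single out $y = x^{-1}g$ as the meaningful pairing, is accurate and worth keeping.
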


\begin{lemma}
Let $f: G \rightarrow \mathbb{C}$ be constant on the cosets of $H \le G$ and non-zero on at least one coset. Then $\hat{f}(\rho) = 0$ if the restriction of $\rho$ to $H$, $\rho |_{H}$ does not contain the trivial representation as a subrepresentation.
\label{constant_on_cosets}
\end{lemma}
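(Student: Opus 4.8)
The plan is to introduce the averaging operator $P_{\rho} := \frac{1}{|H|}\sum_{h\in H}\rho(h)$ on the representation space $V$ of $\rho$, and to establish two facts: (i) $P_{\rho}$ is the projection onto the subspace $V^{H}$ of $H$-fixed vectors, so that $P_{\rho}=0$ exactly when $\rho|_{H}$ contains no copy of the trivial representation; and (ii) the coset-constancy of $f$ forces the identity $\hat f(\rho) = \hat f(\rho)\,P_{\rho}$. Chaining these, if $\rho|_{H}$ has no trivial subrepresentation then $P_{\rho}=0$ and hence $\hat f(\rho) = \hat f(\rho)\cdot 0 = 0$, which is the claim.

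For (i) I would verify the standard idempotent identities by reindexing the group sum: for every $h_{0}\in H$ one has $\rho(h_{0})P_{\rho} = P_{\rho}$ (since $h\mapsto h_{0}h$ permutes $H$), hence $P_{\rho}^{2}=P_{\rho}$ and the image of $P_{\rho}$ lies in $V^{H}$; conversely $P_{\rho}v = v$ for any $v\in V^{H}$, so the image of $P_{\rho}$ is exactly $V^{H}$. Thus $P_{\rho}$ is idempotent with image $V^{H}$, and $\trace(P_{\rho}) = \dim V^{H} = \frac{1}{|H|}\sum_{h\in H}\chi_{\rho}(h) = \langle \chi_{\rho}|_{H},\, \chi_{\mathrm{triv}}\rangle_{H}$ is precisely the multiplicity of the trivial representation in $\rho|_{H}$, using the character facts recalled in Appendix \ref{appendix:rep-theory}. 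In particular $P_{\rho}=0$ if and only if $\rho|_{H}$ does not contain the trivial representation.

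For (ii), starting from Definition \ref{ft}, write $\hat f(\rho)\,P_{\rho} = \frac{1}{|H|}\sum_{a\in G}\sum_{h\in H} f(a)\,\rho(ah)$, and substitute $b=ah$. For each fixed $h$ the map $a\mapsto b$ is a bijection of $G$, and $a = bh^{-1}$ lies in the same left $H$-coset as $b$, so coset-constancy of $f$ gives $f(a)=f(b)$; the double sum therefore collapses to $\frac{1}{|H|}\sum_{h\in H}\hat f(\rho) = \hat f(\rho)$. (If instead $f$ is taken constant on right cosets, the identical computation with $P_{\rho}$ applied on the left yields $P_{\rho}\hat f(\rho) = \hat f(\rho)$, and the conclusion is unchanged.) Combining with (i) finishes the proof.

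The only place requiring care is the bookkeeping: matching the left/right coset convention to the side on which $P_{\rho}$ is multiplied, and confirming that $a\mapsto ah$ is a bijection of $G$ for fixed $h$ so the reindexing is valid. The representation-theoretic input — that $\dim V^{H}$ equals the multiplicity of the trivial representation in $\rho|_{H}$ — is entirely standard and can simply be cited. I note also that the hypothesis that $f$ is nonzero on at least one coset plays no role in this direction; it is only relevant for a converse statement characterizing exactly which irreps of $\hat f$ survive.
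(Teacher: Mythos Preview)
Your proof is correct. Both your argument and the paper's rest on the same representation-theoretic core: the averaging operator $\sum_{h\in H}\rho(h)$ is (up to scale) the projection onto the $H$-invariant subspace $V^{H}$, and hence vanishes exactly when $\rho|_{H}$ contains no trivial subrepresentation. The paper reaches this by first decomposing $f$ as a sum of scaled coset indicators $f_{xH}$, using translation to reduce to $\hat f_{H}(\rho)=\alpha_{H}\sum_{h\in H}\rho(h)$, and then block-diagonalizing $\rho|_{H}$ explicitly to see each nontrivial block sums to zero. You instead bypass the decomposition of $f$ entirely and prove the clean factorization $\hat f(\rho)=\hat f(\rho)\,P_{\rho}$ directly from coset-constancy via the reindexing $b=ah$. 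Your route is a little more streamlined --- it handles $f$ globally rather than coset-by-coset and packages the representation theory as a single projection identity --- while the paper's version makes the mechanism slightly more concrete by exhibiting the block structure. Your remark that the nonzero hypothesis on $f$ is unused in this direction is also correct; the paper does not comment on this.
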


\begin{proof}

The function $f$ can be decomposed as the sum of functions 
\[ f_{xH}(\sigma) = \begin{cases} \alpha_{x} \quad \sigma \in xH \\ 0 \quad \text{otherwise} \end{cases} \]
for each coset $xH$. Because Fourier transform $\hat{f}$ is invariant under translation we may, without loss of generality, analyze only the function $f_{H}$. For a given $\alpha_{x}$, $\hat{f}_{xH}(\rho) = \hat{f}^{x}_{H}(\rho) = \rho(x)\hat{f}_{H}(\rho)$ for all $x \in G$. Recall the definition of $\hat{f}_{H}(\rho)$ from \ref{ft}:
\begin{align}
\hat{f}_{H}(\rho) &= \sum_{g \in G} f_{H}(g)\rho(g) \\
 &= \alpha_{H} \sum_{h \in H} \rho|_{H}(h) \\
 \label{decomp}
 &= \alpha_{H} \sum_{h \in H} T^{-1} [\bigoplus_{\tau_{i} \in \mathcal{T}}\tau_{i}(h)]T \\ 
 &= \alpha_{H} T^{-1} [\bigoplus_{\tau_{i} \in \mathcal{T}}  \sum_{h \in H} \tau_{i}(h)] T
\end{align}
where in \ref{decomp} we decompose $\rho|_{H}$ into a direct sum of irreps of $H$. But because each $\tau_{i}$ is irreducible, $\sum_{h \in H} \tau_{i}(h) = \mathbf{0}$ unless $\tau_i$ is the trivial irrep. Thus, unless the decomposition of $\rho_H$ into irreps of $H$ includes the trivial representation, $\hat{f}|_{H} = 0$
\end{proof}

\subsection{Logits and Counting Cosets}\label{counting_cosets}

In \citet{chughtai_toy_2023}, one way of justifying the GCR algorithm is to study the correlation between the character functions and the neuron activations. We would like to argue that the correlation between the GCR and the coset membership counting function may already exist, and in some simple cases it can be made explicit. 

More precisely, we are measuring the correlation between the character function $\chi(\rho)$ of an irrep $\rho$ with a set function $f:G\rightarrow \mathbb C$. In this section, we provide an explicit characterization of $f$ in terms of trace and irreps, when $f$ counts the membership of cosets.

We are specifically interested in the following situation:
\begin{lemma}
    Suppose $f: G\rightarrow \mathbb C$ is a function such that its Fourier transform $\hat f$ is nonzero only on an irreducible representation $\rho$ and the trivial representation. Let $\hat f(\rho) = A \in M(\mathbb C^n)$. We have the following explicit formula:
    \begin{equation}\label{eqn:fourier_inv}
    f(\sigma) = \dfrac{d_\rho}{|G|}\trace(A\cdot \rho(\sigma^{-1})) + \dfrac{|H|}{|G|}.
    \end{equation}
\end{lemma}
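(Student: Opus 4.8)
The plan is to read off the result directly from the inverse Fourier transform formula \eqref{ift}, using the support hypothesis on $\hat f$ to kill all but two terms. Spelling out \eqref{ift} for our function,
\[
f(\sigma) = \frac{1}{|G|}\sum_{\pi\in\irr(G)} d_\pi \trace\!\left[\hat f(\pi)\,\pi(\sigma^{-1})\right],
\]
every summand with $\pi$ neither trivial nor isomorphic to $\rho$ vanishes, since $\hat f(\pi)=0$ there. Hence the sum collapses to exactly two contributions: one from $\rho$ and one from the trivial representation.

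The $\rho$-contribution is immediate: $\hat f(\rho)=A$ by hypothesis, $d_\rho$ is the dimension of $\rho$, and the term is $\frac{d_\rho}{|G|}\trace(A\,\rho(\sigma^{-1}))$ verbatim. For the trivial representation, the dimension is $1$ and its value at $\sigma^{-1}$ is the scalar $1$, so the whole term reduces to $\frac{1}{|G|}\,\hat f(\text{triv})$. By Definition \ref{ft}, $\hat f(\text{triv}) = \sum_{a\in G} f(a)\cdot 1 = \sum_{a\in G} f(a)$. Adding the two contributions gives $f(\sigma) = \frac{d_\rho}{|G|}\trace(A\,\rho(\sigma^{-1})) + \frac{1}{|G|}\sum_{a\in G} f(a)$.

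It then remains only to identify the constant $\sum_{a\in G} f(a)$ with $|H|$. Here $f$ is the coset-membership counting function, i.e.\ the indicator $\mathbf 1_{xH}$ of a fixed coset of $H\le G$ — and by the translation-invariance of the Fourier transform (as used in the proof of Lemma \ref{constant_on_cosets}) we may as well take the coset to be $H$ itself — so $\sum_{a\in G} f(a) = |xH| = |H|$, all cosets of $H$ having equal cardinality. Substituting yields the claimed identity \eqref{eqn:fourier_inv}. I do not expect a genuine obstacle here; the computation is essentially mechanical. The only points requiring care are (i) treating the trivial representation as a $1\times 1$ block so that its trace is just the scalar $\sum_a f(a)$, and (ii) carrying the $1/|G|$ normalization of \eqref{ift} consistently through both surviving terms.
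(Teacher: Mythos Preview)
Your proof is correct and follows the same approach as the paper, which simply states that the formula is immediate from the Fourier inversion formula \eqref{ift}. Your treatment is in fact more careful than the paper's one-line proof: you explicitly unpack the trivial-representation term as $\frac{1}{|G|}\sum_{a} f(a)$ and then justify why this equals $|H|/|G|$ in the intended application, addressing the fact that $H$ is not actually defined in the lemma's hypotheses.
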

\begin{proof}
    This is immediate by the Fourier inversion formula.
\end{proof}

In this case, although $f$ is not directly written in terms of $\trace(\rho(\sigma^{-1}))$, $f$ is correlated with $\trace(\rho(\sigma^{-1}))$ depending on how much $A$ is concentrated to the diagonal and how even are the diagonal entries. For the rest of the section, we show that under certain conditions, Equation \eqref{eqn:fourier_inv} applies verbatim to the functions that count membership of cosets for a collection of conjugate subgroups.

Given a subgroup $H\le G$, let $1_H$ be the function that takes value $1$ on the subgroup $H$, and takes $0$ otherwise. The action of $G$ on cosets $G/H$ induces a representation of $G$ on $\mathbb C^{|G/H|}$ by permuting the basis accordingly. We call it the \emph{permutation representation} of $G$ on $G/H$.
\begin{lemma}
     The Fourier transform of $1_H$ is nonzero only at the irreducible components of the permutation representation of $G$ on $G/H$.
\end{lemma}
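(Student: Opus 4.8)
The plan is to show that the Fourier transform $\widehat{1_H}(\rho)$ vanishes unless $\rho$ appears as an irreducible component of the permutation representation $\pi$ of $G$ on $G/H$. The natural route is to relate $1_H$ to the permutation representation via Frobenius reciprocity, since the permutation representation on $G/H$ is precisely the induced representation $\mathrm{Ind}_H^G \mathbf{1}$, and an irrep $\rho$ occurs in $\mathrm{Ind}_H^G \mathbf{1}$ if and only if $\rho|_H$ contains the trivial representation of $H$. That last condition is exactly the hypothesis appearing in Lemma~\ref{constant_on_cosets}, so the two lemmas are really two faces of the same fact.

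First I would observe that $1_H$ is constant on the (left) cosets of $H$: it is $1$ on the coset $H$ itself and $0$ on every other coset. Hence $1_H$ falls under the hypothesis of Lemma~\ref{constant_on_cosets} — it is constant on cosets of $H$ and non-zero on at least one of them. Applying that lemma directly gives that $\widehat{1_H}(\rho) = 0$ whenever $\rho|_H$ does not contain the trivial representation of $H$ as a subrepresentation. Second, I would invoke Frobenius reciprocity: the multiplicity of $\rho$ in $\mathrm{Ind}_H^G \mathbf{1}$ equals the multiplicity of $\mathbf{1}$ in $\rho|_H$. Third, I would identify $\mathrm{Ind}_H^G \mathbf{1}$ with the permutation representation of $G$ on $G/H$ (this is the standard description of an induced representation from the trivial character: the induced module has basis indexed by cosets $G/H$ and $G$ acts by permuting them). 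Combining these, $\rho|_H$ contains $\mathbf{1}$ exactly when $\rho$ is an irreducible component of the $G/H$ permutation representation; contrapositively, if $\rho$ is \emph{not} such a component then $\rho|_H$ omits the trivial representation, and Lemma~\ref{constant_on_cosets} forces $\widehat{1_H}(\rho) = 0$. This is the whole argument.

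If one prefers to avoid citing Frobenius reciprocity as a black box, an equivalent and fully self-contained approach is to compute directly: the permutation representation $\pi$ on $\mathbb{C}^{|G/H|}$ has a distinguished vector $v_0$ (the basis vector for the coset $H$), and one checks that $f \mapsto \langle \pi(\cdot) v_0, v_0\rangle$-type matrix coefficients of $\pi$ span exactly the space of functions constant on double cosets $H\backslash G/H$; more to the point, $1_H$ lies in the span of the matrix coefficients of $\pi$, and matrix coefficients of $\pi$ are supported (in the Fourier domain) precisely on the irreducible constituents of $\pi$. Either way, the content is the same: $1_H$ is built out of $\pi$, so its spectrum sits inside the spectrum of $\pi$, which is $\mathrm{Irr}(G)$ restricted to the constituents of $\pi$, plus the trivial representation (always a constituent, since $\pi$ fixes the all-ones vector).

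The main obstacle is essentially bookkeeping rather than a genuine difficulty: one must be careful about the left-versus-right coset conventions (the paper uses left cosets $xH$, and the relevant induced representation is on left cosets $G/H$), and about the fact that the statement only asserts the Fourier support is \emph{contained in} the constituents of $\pi$ — it does not claim equality (indeed the trivial representation is always among the constituents, and for generic $H$ essentially all constituents of $\pi$ will have nonzero $\widehat{1_H}$, but proving that is not needed here). Given Lemma~\ref{constant_on_cosets} is already available, the cleanest write-up is simply: apply Lemma~\ref{constant_on_cosets}, then translate its hypothesis into the language of the permutation representation via Frobenius reciprocity. I would expect the author's proof to be one or two lines along exactly these lines.
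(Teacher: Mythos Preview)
Your proposal is correct and follows essentially the same approach as the paper: both reduce to showing $\widehat{1_H}(\rho)=0$ exactly when $\rho|_H$ has no trivial subrepresentation, and then invoke Frobenius reciprocity to identify this with $\rho$ not appearing in $\mathrm{Ind}_H^G\mathbf{1}\cong\mathbb{C}[G/H]$. The only cosmetic difference is that you cite Lemma~\ref{constant_on_cosets} for the first step, whereas the paper re-derives it directly by computing $\sum_{h\in H}\rho(h)$ as $|H|$ times the projection onto the $H$-fixed subspace $V^H$.
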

\begin{proof}
    By definition, the Fourier transform of $1_H$ on an irrep $\rho$ is
    \[
    \widehat{1_H}(\rho) = \sum\limits_{a\in H} \rho(a).
    \]
    Notice that the image of $\sum\limits_{a\in H} \rho(a)$ are invariant under $H$ due to the symmetry of this expression.
    
    Let $V$ be the vector space where $\rho$ acts on. Under the action of the subgroup $H$ through $\rho$, one can decompose $V$ as irreps of $H$. We group them into two parts:
    \[
    V = V^H \oplus V',
    \]
    where $V^H$ is a direct sum of copies of trivial representation of $H$ (or in other words, the invariant subspace of $V$ under $H$), and $V'$ is the direct sum of nontrivial irreducible components of $V$.

    We immediately see the following by definition:
    \[
    \sum\limits_{a\in H} \rho(a)|_{V^H} = |H|\cdot \text{Id}_{V^H}.
    \]

    Also by definition, nontrivial irreps of $H$ do not have invariant subspaces since they do not admit proper sub-representations. Therefore, 
    \[
    \sum\limits_{a\in H} \rho(a)|_{V'} = 0.
    \]

    As a result, $\widehat{1_H}(\rho)$ is simply a scaled projection to the invariant subspace of $V$. Whether it is zero depends on whether $\text{Res}_H \rho$ has any trivial components.
    
    By Frobenius reciprocity,
    \[
    \langle \text{Ind}^G_H(1_H), \chi(\rho) \rangle = \langle 1_H, \chi(\text{Res}_H(\rho)) \rangle_H,
    \]
    where $\chi(\rho)$ is the character of the irrep $\chi\in\irr(G)$ given by its traces, and $\langle\cdot\rangle$ is the inner product between class functions.

    The left-hand side $\langle \text{Ind}^G_H(1_H), \chi(\rho) \rangle$ is nonzero if and only if $\rho$ is an irreducible component of the permutation representation of $G$ on $G/H$. The right-hand side $\langle 1_H, \chi(\text{Res}_H(\rho)) \rangle_H$ is nonzero if and only if $\text{dim}(V^H) \neq 0$

\end{proof}

Note that this lemma also works for $1_{gH}$ for a coset $gH$, since Fourier transforms turns the translation action by $g$ into group multiplication by $\rho(g)$. 

In the double coset circuit, we are specifically interested in the membership counting functions. More specifically, let $H_1, \cdots, H_n$ be a collection of conjugate subgroups of $G$. Given an element $\sigma\in G$, define the membership counting function as
\[
F(\sigma) = \sum\limits_{i=1}^n 1_{\sigma H_i}.
\]

Combining all previous results, we have the following corollary describing the membership counting function $F$.
\begin{corollary}
    If the permutation representation of $G$ on $G/H_1$ has only $2$ irreducible components, the Fourier transform $\hat F$ of the membership counting function $F$ is nonzero only at these $2$ irreducible components. In particular, the equation \eqref{eqn:fourier_inv} applies to $F$.
\end{corollary}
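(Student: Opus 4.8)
The plan is to reduce the corollary to the two lemmas that immediately precede it, together with one elementary observation about conjugate subgroups. The key observation is that conjugate subgroups induce isomorphic permutation representations: if $H_i = g_i H_1 g_i^{-1}$, then the assignment $xH_1 \mapsto x g_i^{-1} H_i$ is a well-defined $G$-set isomorphism $G/H_1 \cong G/H_i$ (well-definedness amounts to $g_i h g_i^{-1} \in H_i$ for $h \in H_1$, which holds by definition; $G$-equivariance is immediate). Consequently the permutation representation of $G$ on $G/H_i$ has the same multiset of irreducible components as the one on $G/H_1$, and under the hypothesis these are exactly two, one of which must be the trivial representation since the all-ones vector is always $G$-fixed. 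Call the nontrivial one $\rho_0$.

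Next I would apply the lemma on the Fourier transform of $1_H$ to each $H_i$ in turn: $\widehat{1_{H_i}}(\rho) = 0$ unless $\rho$ is an irreducible component of the permutation representation on $G/H_i$, i.e.\ unless $\rho$ is the trivial representation or $\rho_0$. Since $1_{\sigma H_i}$ is the left $\sigma$-translate of $1_{H_i}$, translation-equivariance of the group Fourier transform gives $\widehat{1_{\sigma H_i}}(\rho) = \rho(\sigma)\,\widehat{1_{H_i}}(\rho)$, which vanishes on exactly the same set of irreps. Summing over $i = 1, \dots, n$ and using linearity, $\hat F(\rho) = \sum_{i=1}^n \widehat{1_{\sigma H_i}}(\rho)$ is therefore nonzero only at the trivial representation and at $\rho_0$.

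Finally I would invoke the earlier lemma whose hypothesis is precisely "$\hat f$ is nonzero only on an irreducible representation $\rho$ and the trivial representation": applied with $f = F$, $\rho = \rho_0$, and $A = \hat F(\rho_0)$, it yields an expression of the shape \eqref{eqn:fourier_inv}, the constant term being $\tfrac{1}{|G|}\hat F(\mathrm{triv}) = \tfrac{1}{|G|}\sum_{i=1}^n |H_i| = \tfrac{n|H_1|}{|G|}$ (so the $\tfrac{|H|}{|G|}$ of \eqref{eqn:fourier_inv} becomes its $n$-fold version, or one recovers the verbatim formula after rescaling $F$ by $1/n$). The only step requiring genuine care — the main obstacle, such as it is — is the first one: confirming that conjugation really transfers the "exactly two irreducible components" hypothesis from $H_1$ to every $H_i$ simultaneously, i.e.\ that $\mathrm{Ind}_{H_i}^G \mathbf{1}$ and $\mathrm{Ind}_{H_1}^G \mathbf{1}$ have the same character. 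Everything after that is bookkeeping with the translation property of the Fourier transform and a direct appeal to the preceding lemmas.
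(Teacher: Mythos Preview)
Your proposal is correct and follows exactly the route the paper intends: the corollary is stated there merely as ``combining all previous results,'' and the pieces you assemble --- the lemma on $\widehat{1_H}$, the translation remark $\widehat{1_{gH}}(\rho)=\rho(g)\widehat{1_H}(\rho)$, linearity, and finally the Fourier-inversion lemma yielding \eqref{eqn:fourier_inv} --- are precisely those results. The one point you rightly flag as needing care, that conjugate $H_i$ induce $G$-isomorphic coset spaces and hence identical irreducible decompositions of $\mathrm{Ind}_{H_i}^G \mathbf 1$, is left entirely implicit in the paper, so your write-up is in fact more complete than the original; your observation about the constant term becoming $n|H_1|/|G|$ rather than $|H|/|G|$ is also a genuine (if minor) refinement of the paper's ``verbatim'' claim.
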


One may wonder how restrictive it is for the permutation representation on $G/H$ to only have $2$ irreducible components. The follow lemma shows that it applies to our case when $G=S_n$ and $H=S_{n-1}$.
\begin{lemma}
    For $S_n$ and the subgroup $S_{n-1}$ fixing one element, the permutation representation has only two irreducible components.
\end{lemma}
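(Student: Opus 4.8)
The plan is to show that the permutation representation of $S_n$ on the cosets $S_n/S_{n-1}$ decomposes into exactly two irreducible pieces: the trivial representation and the standard $(n-1)$-dimensional representation $\rho_{(n-1,1)}$. The cleanest route is to identify this permutation representation concretely. Since $S_{n-1}$ (the stabilizer of the element $n$, say) acts on $S_n$, the coset space $S_n/S_{n-1}$ is in canonical bijection with the orbit of $n$ under $S_n$, i.e. with the set $[n] = \{1,\dots,n\}$ itself. Therefore the permutation representation on $G/H$ is isomorphic to the natural permutation action of $S_n$ on $\mathbb{C}^n$ by coordinate permutation — exactly the "permutation representation" already discussed in Appendix \ref{appendix:rep-theory} (there written out for $n=3$).

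From here I would invoke the standard fact, already foreshadowed in the excerpt's $S_3$ example, that $\mathbb{C}^n$ with this action splits as $\mathbb{C}^n \cong \mathbf{1} \oplus \rho_{(n-1,1)}$, where $\mathbf{1}$ is the line spanned by $(1,1,\dots,1)^\top$ (visibly invariant, as shown in the appendix) and $\rho_{(n-1,1)}$ is its orthogonal complement, the subspace of vectors whose coordinates sum to zero. To finish one must check that the sum-zero subspace is itself irreducible. The quickest way is a character/inner-product computation: if $\chi$ is the character of $\mathbb{C}^n$, then $\chi(\sigma)$ equals the number of fixed points of $\sigma$ acting on $[n]$, and by Burnside's lemma $\langle \chi, \chi\rangle = \tfrac{1}{|G|}\sum_{\sigma}\chi(\sigma)^2$ counts the number of orbits of $S_n$ on $[n]\times[n]$, which is $2$ (the diagonal and the off-diagonal). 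Since $\langle \chi,\chi\rangle = 2$ and $\chi$ contains the trivial character exactly once ($\langle\chi,\mathbf{1}\rangle = $ number of orbits on $[n]$, which is $1$), the remaining constituent $\chi - \mathbf{1}$ must be a single irreducible character. Hence exactly two irreducible components.

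The main obstacle is essentially bookkeeping rather than depth: one must be careful that "the permutation representation of $G$ on $G/H$" as defined just above in the excerpt really does coincide with the natural action on $\mathbb{C}^n$ — this is the orbit–stabilizer identification $G/\mathrm{Stab}(n) \leftrightarrow [n]$ — and one should state which transitive action is meant. After that, the double-orbit count (2 orbits on $[n]^2$ for $n\ge 2$) is the only genuine computation, and it is immediate. An alternative that avoids even Burnside is to quote the classical description of the irreducibles of $S_n$ by partitions, under which $\mathbb{C}^n = M^{(n-1,1)} = S^{(n)} \oplus S^{(n-1,1)}$, both summands being Specht modules and hence irreducible; but the self-contained character argument above is preferable here since the excerpt has already set up characters and inner products of class functions in Appendix \ref{appendix:rep-theory}.
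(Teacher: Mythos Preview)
Your proposal is correct and follows essentially the same route as the paper: identify the coset representation on $S_n/S_{n-1}$ with the natural action of $S_n$ on $\mathbb{C}^n$ and then invoke the decomposition $\mathbb{C}^n \cong \mathbf{1}\oplus \rho_{(n-1,1)}$. The paper's proof is a one-line appeal to this well-known decomposition, whereas you additionally supply a self-contained verification of irreducibility via $\langle\chi,\chi\rangle=2$ from the orbit count on $[n]\times[n]$; that extra step is sound and arguably an improvement in rigor, but the underlying idea is the same.
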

\begin{proof}
    The natural representation of $S_n$ on $\mathbb C^n$ (by permuting the basis) decomposes as a direct sum of trivial representation and the standard representation of dimension $n-1$.
\end{proof}

Indeed, we see that when looking at the action of an individual neuron on the prediction space (i.e. ``if this neuron fires, which predictions become more likely and which less?''), we see that it is only neurons that are predicting the \textit{same coset} that are correlated. The average pairwise correlation of neuron actions is uncorrelated, as is the correlation of neurons associated with the same irrep. Refer to Table \ref{tab:unembed_corr} for the full results.

\begin{table}[h]
    \centering
     \caption{The correlation of unembedding neurons. Neurons that correspond to the same coset are averaged together in the unembedding, leading to the unembedding vectors being highly correlated.}
    \begin{tabular}{lcc}
    \toprule
        &  Mean Correlation   &   Std Dev Correlation   \\
    \midrule
       Within Coset &  0.814 & 0.445 \\ 
    \midrule
       Within Subgroup Conjugacy Class & -0.002 & 0.222 \\
    \midrule
       Baseline & -0.003 & 0.163 \\
    \bottomrule
    \end{tabular}
   .
    \label{tab:unembed_corr}
\end{table}

\subsection{An Asymptotic Analysis}

Our theory of coset circuits and the GCR algorithm of \citep{chughtai_toy_2023} cannot be equivalent because there is no one-to-one relationship between irreps and subgroups. Even for $S_5$, there are more subgroups than irreps. Quantitatively speaking, the irreps already fail to catch up with the number of subgroups. For the direct comparison of $S_n$ refer to 

Asymptotically, the number of subgroups of $S_n$ is bounded below as follows (see \citet[Corollary 3.3]{pyber}):
\[
2^{(\frac{1}{16} + o(1))n^2} \leq |\text{Sub}(S_n)|,
\]
whereas the number of irreps of $S_n$ is asymptotically the following (see \citet{erdos}):
\[
|\irr(S_n)| \sim \dfrac{1}{4n\cdot 3^{\frac{1}{2}}} e^{\pi(\frac{2}{3})^{\frac{1}{2}}n^{\frac{1}{2}}}.
\]
We see that the former has a much higher asymptotic growth than the latter.

In practice, as can be seen in Table \ref{tab:s5_subgroups}, many subgroups concentrate on more than one irrep. We do not have an explanation for why the coset circuits always do concentrate one irrep. In practice, the different values for the cosets are arranged so that the contributions of all but one irrep cancel out. We hypothesize that it may have something to do with the margin maximization effect discussed in \citep{morwani2024feature}. As we mention in the main body, we observe that subgroups which concentrate on more than one irrep will form coset circuits that concentrate entirely on any of the irreps, while still behaving equivalently. We do not think that there is in fact a connection between what the circuit is doing the irrep.

\begin{table}[h]
    \centering
    \caption{The number of subgroups and the number of irreps from $S_5$ to $S_{12}$. The numbers of subgroups use the A005432 sequence of the OEIS \citep{oeis}. The numbers of irreps corresponds to the number of integer partitions of $n$ and use the A000041 sequence of the OEIS \citep{oeis}.}
    \begin{tabular}{lccccccccc}
    \toprule
        &   $S_5$   &   $S_6$   &   $S_7$   &   $S_8$   &   $S_9$   &   $S_{10}$  &   $S_{11}$ & $S_{12}$  \\
    \midrule
       Number of subgroups &  $156$ & $1455$ & $11300$ & $151221$ & $1694723$ & $29594446$ & $404126228$ & $10594925360$  \\ 
    \midrule
       Number of irreps & $7$ & $11$ & $15$ & $22$ & $30$ & $42$ & $56$ & $77$ \\
    \bottomrule
    \end{tabular}
    
    \label{tab:sn_subgroups_sequence}
\end{table}





\clearpage
\section{Extra Graphs}
\label{appendix:graphs}


\subsection{Distribution over Subgroups and Cosets}
\label{sec:coset-circuit-distribution}

\begin{figure*}[hb]
\centering
    \centering   
    \begin{subfigure}[128 Models trained on $S_5$]{
        \includegraphics[width=0.45\linewidth]{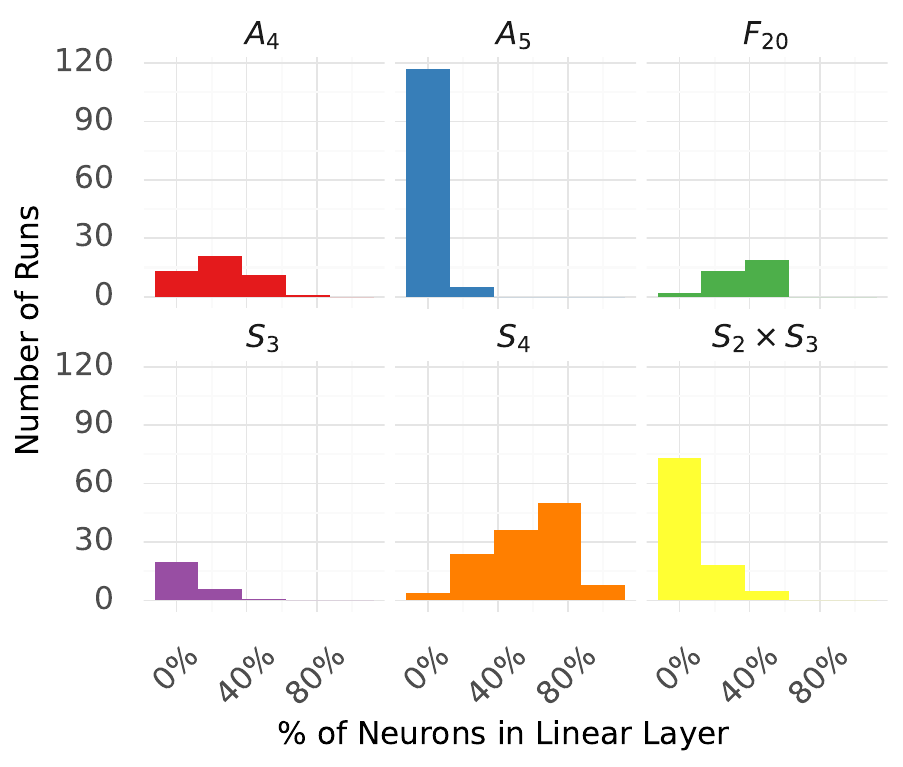}
        \label{fig:s5_small_coset_dist}}
    \end{subfigure}
    \hfill
    \begin{subfigure}[100 Models trained on $S_6$]{
        \includegraphics[width=0.45\linewidth]{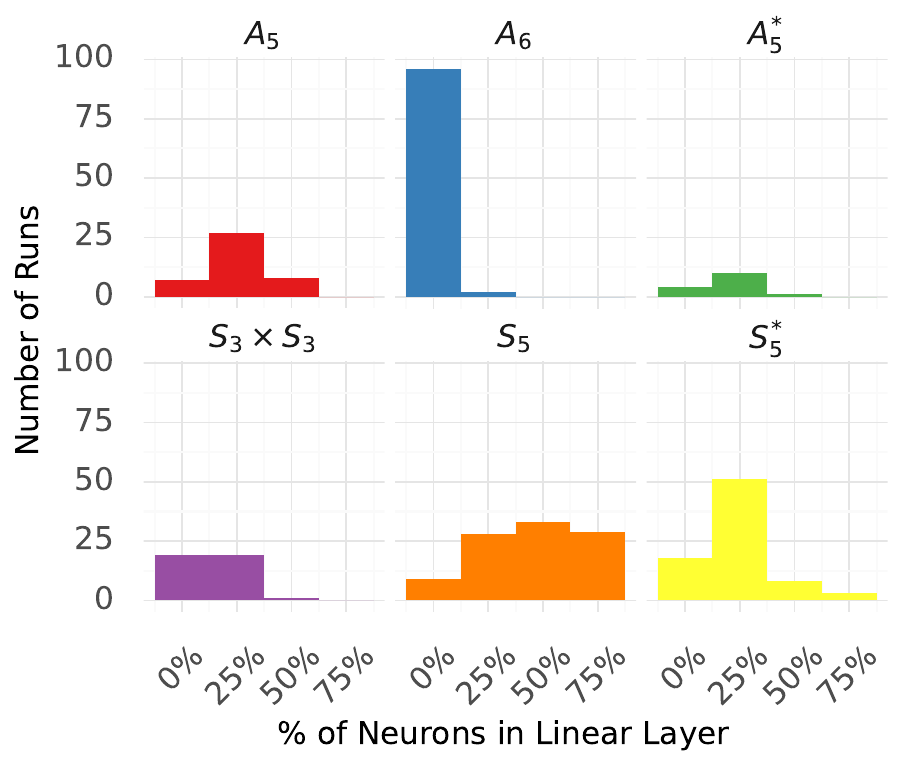}
        \label{fig:s6_coset_distribution}}
    \end{subfigure}
    \caption{Distribution of coset circuits for models trained on $S_5$ and $S_6$ with different initial seeds. Every model has a few sign circuit neurons that correspond to $A_n < S_n$, but the model cannot completely solve the task with only the sign circuit, so there are never more than a few. Every other subgroup could, with enough neurons, be used to completely solve the the multiplication, but in general if a model primarily uses a single subgroup it is $S_{n-1}$ (in the main body of the paper we refer to these subgroups as $H_i$, for the element $i \in [n]$ that is fixed). Every model has at least a few $S_{n-1}$ neurons. Many models use a mix of subgroups and there is often a ``long tail'' of a subgroup being represented by only one or two neurons. The subgroups marked with asterisks, $A_{5}^{*}$ and $S_{5}^{*}$, correspond to the ``exceptional'' subgroups of $S_6$, which come from an outer automorphism that only $S_6$ has \citep{s6_outer_automorphism}. These subgroups are isomorphic to $S_5$ and $A_5$, but not conjugate to the subgroups that come from fixing an element in $\{1, \dots, 6\}.$}
    \label{fig:s6-coset-distribution}
    
\end{figure*}
\clearpage

\subsection{Other Examples of Coset Circuits Forming}

\begin{figure*}[h!]
\centering
    \centering   
    \begin{subfigure}[$S_3 \! \times \! S_2$ Left Permutations]{
        \includegraphics[width=0.7\linewidth]{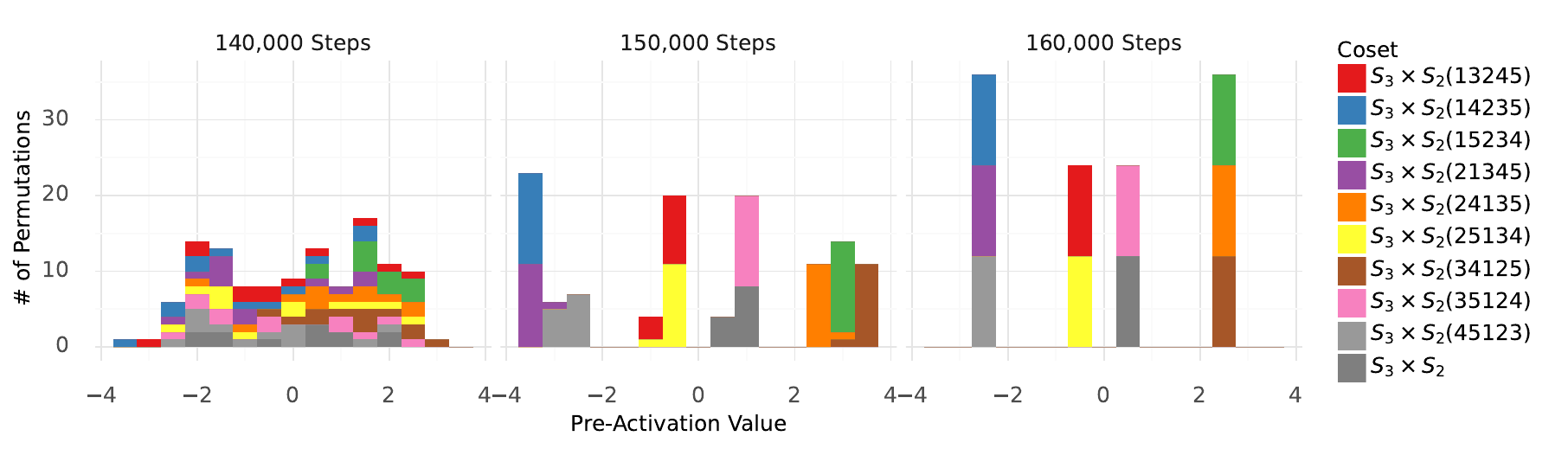}
        \label{fig:s3xs2_leftneuron}}
    \end{subfigure}
    \begin{subfigure}[$S_3 \! \times \! S_2$ Right Permutations]{
        \includegraphics[width=0.7\linewidth]{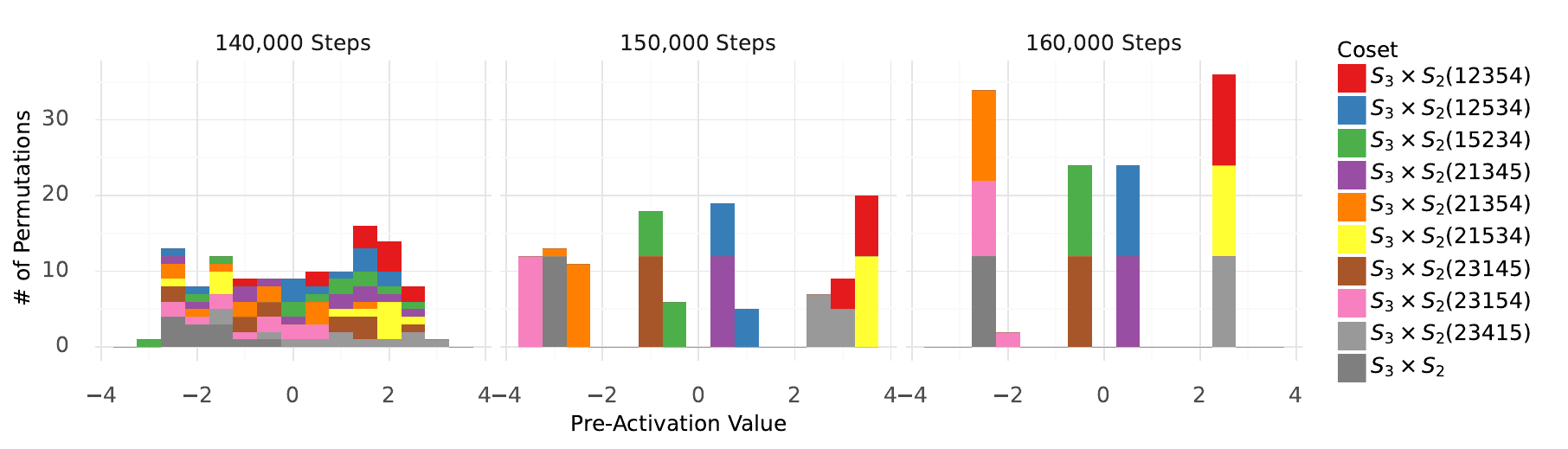}
        \label{fig:s3xs2_right_neuron}}
    \end{subfigure}
    \caption{The formation of an $S_3 \! \times \! S_2$ neuron.}    
\end{figure*}

\begin{figure*}[h!]
\centering
    \centering   
    \begin{subfigure}[$A_4$ Left Permutations]{
        \includegraphics[width=0.7\linewidth]{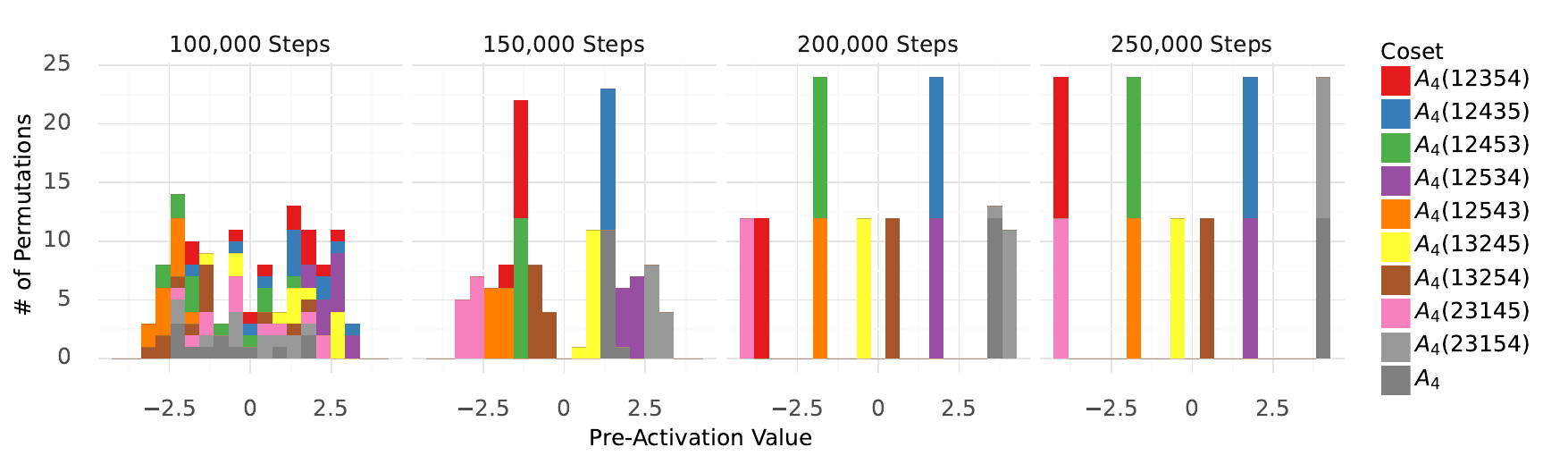}
        \label{fig:a4_leftneuron}}
    \end{subfigure}
    \begin{subfigure}[$A_4$ Right Permutations]{
        \includegraphics[width=0.7\linewidth]{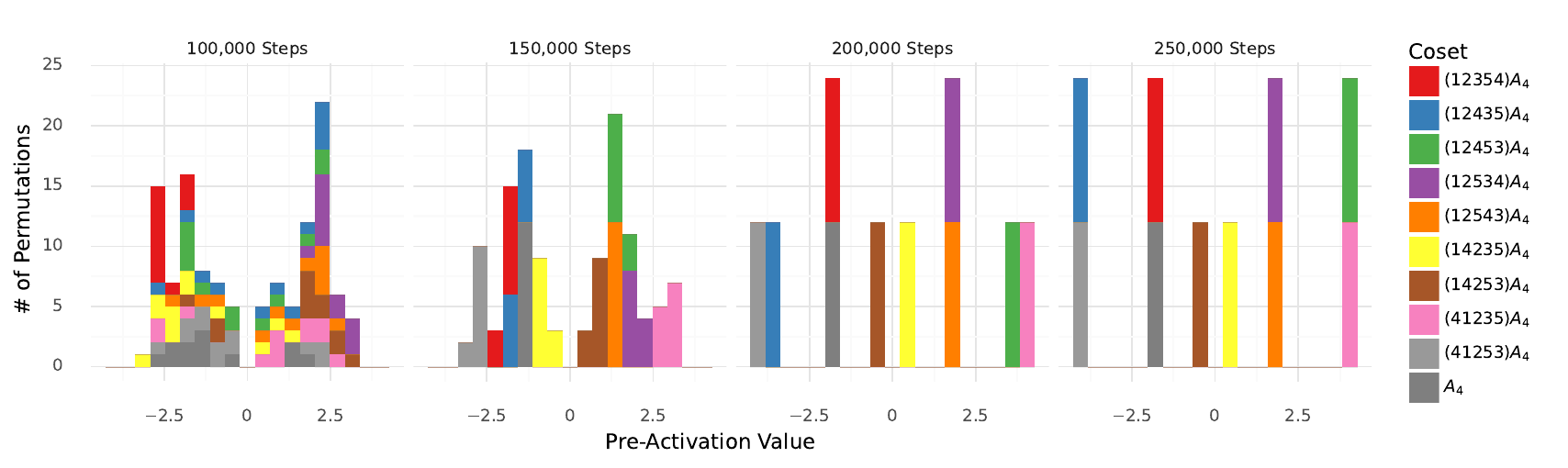}
        \label{fig:a4_right_neuron}}
    \end{subfigure}
    \caption{The formation of an $A_4$ neuron.}    
\end{figure*}

\clearpage

\section{Irreducible Representations}
\label{appendix:irrep}

\subsection{Symmetric Group $S_5$}

For a subgroup $H\le G$, we can investigate the Fourier transform of the indicator function $1_H$ by looking at its evaluation at each irrep. Concretely, we first center the indicator function by defining 
\[
f(g) = 
\begin{cases}
-\dfrac{|H|}{|G|}, ~g\not\in H\\
1-\dfrac{|H|}{|G|}, ~g\in H.
\end{cases}
\]
By doing so, $\hat{f}$ evaluates to $0$ on the trivial representation of $G$.

Given an irrep $\rho\in\irr(G)$, we first denote the value of the Fourier transform of $f$ at $\rho$ by $\hat{f}|_\rho$. The \emph{contribution} of $\rho$ to $\hat{f}$ is defined by the following:

\[
\dfrac{\| \hat{f}|_\rho \|^2}{\sum\limits_{\delta\in\irr(G)} \| \hat{f}|_\delta \|^2}
\]
Here, we list all the conjugacy classes of subgroups of $S_5$ and how each irrep of $S_5$ contributes to their centered indicator function. We center the indicator function to remove the contribution of the trivial irrep, which is only based on the index of the subgroup. This step makes the contributions comparable. In the first column, we show the homomorphism type of each subgroup. Recall that two groups $G, G'$ are \textit{homomorphic} if there exists a function $f: G \rightarrow G'$ such that for all $g,\;h \in G$, $f(gh) = f(g)f(h)$. Every group within a conjugacy class is a homomorphic, with the homomorphism of two subgroups $H, \; H'$ of $G$ given by conjugation by an element of $g \in G$, $h \mapsto ghg^{-1}$. Two conjugacy classes of subgroups, however, may be \textit{homomorphic} as groups, but no homomorphism can be given as conjugation by an element of $G$. Different conjugacy classes of subgroups that are homomorphic are distinguished in the second column by an example set of generators.
In the list:
\begin{itemize}
    \item $C_n$ means cyclic groups of order $n$.
    \item $S_n$ means the symmetric group of $n$ elements.
    \item $A_n$ means the alternating group of $n$ elements,the subgroup of $S_n$ consisting of even permutations. Recall than an ``even'' permutation is one that consists of an even number of transpositions.
    \item $D_{2n}$ means the $n$-gon dihedral group of order $2n$ (the symmetric group of regular polyhedron with $n$ edges).
    \item $F_{20}$ means the Frobenius group of order $20$, isomorphic to $C_4 \ltimes C_5$ \citep{dummit_abstract_2003}.
\end{itemize}
\begin{table}[htp]
\centering
\begin{tabular}{ccccccccc}
\toprule
Isomorphism type & Generators &  Size &  $(4, 1)$ & $(3, 2)$ & $(3, 1^2)$  & $(2^2, 1)$ &  $(2, 1^3)$ &  $(1^5)$\\ 
\midrule
$C_{2}$ & $\langle (1 2) \rangle$& 2  & 20.3\% & 25.4\% & 30.5\% & 17\% & 6.8\% & - \\
\hline
$C_{2}$ & $\langle (1 2)(3 4) \rangle$ & 2 & 13.6\% & 25.4\% & 20.3\% & 25.4\% & 13.6 & 1.7\%\\
\midrule
$C_{3}$ & $\langle (1 2 3) \rangle$ & 3 & 20.1\% & 12.8\% & 30.8\% & 12.8\% & 20.5\% & 2.6\% \\
\midrule

$C_{4}$  & $\langle (1 2 3 4) \rangle$ & 4 & 13.6\% & 25.4\% & 20.3\%  & 25.4\% & 13.6\% & 1.7\% \\
\midrule

$C_{2} \times C_{2}$ &  $\langle (1 2), (3 4) \rangle$ & 4 & 27.6\% & 34.5\% & 20.7\% & 17.2\% & - & - \\

\midrule 

$C_{2} \times C_{2}$  & $\langle (1 2)(3 4), (1 3)(2 4) \rangle$  & 4 & 13.8\% & 34.5\% & - & 34.5\% & 13.8\% & 3.5\% \\
\midrule

$C_{5}$  & $\langle (1 2 3 4 5) \rangle$ & 5 & - & 21.7\% & 52.2\% & 21.7\% & - &  4.4\% \\
\midrule

$C_{6}$  &$\langle (1 2 3), (4 5) \rangle$ & 6 & 21.1\% & 26.3\% & 31.6\% & - & 21.1\% & - \\
\midrule

$S_{3}$  & $\langle (1 2 3), (1 2) \rangle $ & 6  & 42.1\% & 26.3\% & 31.6\% & - & - & - \\
\midrule
$S_{3}$ \footnote{Referred to as "twisted" $S_3$ in plots.}  & $\langle (1 2 3), (1 2)(4 5) \rangle $& 6  & 21.1\% & 26.3\% & - & 26.3\% & 21.1\% & 5.3\% \\
\midrule

$D_{8}$  &$\langle (1 2 3 4), (1 3) \rangle $ & 8 & 28.6\%& 35.7\% & - & 35.7\% & - & - \\
\midrule

$D_{10}$  &$\langle (1 2 3 4 5), (2 5)(3 4) \rangle $ & 10 & - & 45.5\% & - & 45.5\% & - & 1\%\\
\midrule

$S_{3}\!\times\!S_{2}$  & $\langle (1 2 3), (1 2), (4 5) \rangle $ & 12 & 55.6\% & 44.4\% & - & - & - & - \\
\midrule

$A_{4}$  &$\langle (1 2)(3 4), (1 2 3) \rangle $ & 12 & 44.4\% & - & - & - & 44.4\% & 11.2\%  \\
\midrule

$F_{20}$  & $\langle (1 2 3 4 5), (2 3 5 4) \rangle $& 20 & - & - & - & 100\%  & - & - \\
\midrule

$S_{4}$  & $\langle (1 2 3 4 5), (1 2) \rangle $& 24 & 100\% & - & - & - & - & - \\
\midrule

$A_{5}$  & $\langle (1 2 3 4 5), (1 2 3) \rangle$& 60 & - & - & - & - & -  & 100\% \\

\bottomrule 
\end{tabular}
\caption{Subgroups of $S_5$ and the contribution of each irrep to their centered indicator function.}\label{tab:s5_subgroups}
\end{table}


\end{document}